\title{Learning to steer with Brownian noise}
\author{ Stefan\ Ankirchner\thanks{Jena University, Department of Mathematics\\
 email: \href{mailto:s.ankirchner@uni-jena.de}{s.ankirchner@uni-jena.de}}, Jan Kallsen\thanks{Kiel University, Mathematical Department\\
 email: \href{mailto:kallsen@math.uni-kiel.de}{kallsen@math.uni-kiel.de}},
 Sören Christensen\thanks{Kiel University, Mathematical Department\\
 email: \href{mailto:christensen@math.uni-kiel.de}{christensen@math.uni-kiel.de}},
Philip Le Borne\thanks{Kiel University, Mathematical Department\\email: \href{mailto:leborne@math.uni-kiel.de}{leborne@math.uni-kiel.de}},
Stefan\ Perko\thanks{Jena University, Department of Mathematics\\
 email: \href{mailto:stefan.perko@uni-jena.de}{stefan.perko@uni-jena.de}}
}
\date{\today}
\pgfplotsset{compat=1.13}
\theoremstyle{definition}
\theoremstyle{plain}
\newtheorem{Th}{Theorem}[section]
\newtheorem{Lemma}[Th]{Lemma}
\newtheorem{Prop}[Th]{Proposition}
 \theoremstyle{definition}
\newtheorem{Remark}[Th]{Remark}
\numberwithin{equation}{section}
\definecolor{hellgrau}{rgb}{0.90,0.90,0.90}
\definecolor{commentcol}{rgb}{0.0823,.4902,0.0}
\definecolor{mGreen}{rgb}{0,0.6,0}
\definecolor{mGray}{rgb}{0.5,0.5,0.5}
\definecolor{mPurple}{rgb}{0.58,0,0.82}
\definecolor{backgroundColour}{rgb}{0.95,0.95,0.92}
\lstdefinestyle{PStyle}{
    backgroundcolor=\color{backgroundColour},   
    commentstyle=\color{mGreen},
    keywordstyle=\color{magenta},
    numberstyle=\tiny\color{mGray},
    stringstyle=\color{mPurple},
    basicstyle=\footnotesize,
    breakatwhitespace=false,         
    breaklines=true,                 
    captionpos=b,                    
    keepspaces=true,                 
    numbers=left,                    
    numbersep=7pt,                  
    showspaces=false,                
    showstringspaces=false,
    showtabs=false,                  
    tabsize=2,
    language=Python
}
\tiny\color{mGray},
\newcommand{\N}{\mathbb{N}}
\newcommand{\R}{\mathbb{R}}
\newcommand{\abs}[1]{ \left \vert #1 \right \vert }
\newcommand{\sgn}{\textup{sgn}}
\newcommand{\sign}{\textup{sign}}
\newcommand{\cF}{\mathcal{F}}
\newcommand{\ceil}[1]{\lceil #1 \rceil}
\newcommand{\E}{\mathbb E}
\renewcommand{\P}{\mathbb P}
\begin{document}

\maketitle

\maketitle
\begin{abstract}
This paper considers an ergodic version of the bounded velocity follower problem, assuming that the decision maker lacks knowledge of the underlying system parameters and must learn them while simultaneously controlling. We propose algorithms based on moving empirical averages and develop a framework for integrating statistical methods with stochastic control theory. Our primary result is a logarithmic expected regret rate. To achieve this, we conduct a rigorous analysis of the ergodic convergence rates of the underlying processes and the risks of the considered estimators.
\end{abstract}
\noindent
\small{\textit{\href{https://mathscinet.ams.org/mathscinet/msc/msc2020.html}{2020 MSC:}} Primary 93E10; secondary 60G35, 68Q32, 93C73, 93E11\\
\textit{key words:} Data-driven stochastic control, continuous-time reinforcement learning, model-based reinforcement learning, bounded velocity follower problem, regret bounds}
\normalsize

\section{Introduction}


The modern theory of stochastic control typically assumes complete knowledge of the underlying system dynamics. While significant theoretical advancements have been made in this area, see \cite{MR3931325,MR2179357}, the practical application of stochastic control often faces challenges when the system model is uncertain or unknown. In recent years, Reinforcement learning (RL) has emerged as a promising approach to address this issue, enabling agents to learn optimal control policies through trial-and-error interactions with the environment. However, RL's success often hinges on the availability of vast amounts of data, and the learned control policies can be difficult to interpret, especially when deep learning techniques are employed, see \cite{sutton2018reinforcement}. 

To bridge the gap between fully model-based and model-free approaches, research has increasingly focused on model-based reinforcement learning. By leveraging partial knowledge of the system dynamics, this approach aims to combine the strengths of stochastic control theory with the adaptability of RL. This can lead to improved learning efficiency, theoretical guarantees, and more interpretable control policies. The predominant metric used to assess the performance of learning algorithms in this context is the (expected) regret. Regret quantifies the difference between the expected cumulative cost incurred by the learning algorithm and that of an optimal policy with full knowledge of the system dynamics. This concept has become a standard in the analysis of multi-armed bandit problems and related questions, as exemplified by the comprehensive overview in \cite{Slivkins}.

\subsection{Problem formulation}
This article contributes to the ongoing research in model-based reinforcement learning by considering a long-term average version of the classical and influential bounded velocity follower problem, going back to \cite{benevs1980}, see also Chapter 6.5 in \cite{karatzas1991brownian}. Given an underlying standard Brownian motion $W$, the decision maker controls the drift $b_t$ of the process
\begin{align}\label{SDE-0}
    dX_t^b = b_tdt + dW_t, \qquad X_0^b = x\in \R.
\end{align}
The goal is to find a control $b$, which keeps the object as close as possible to a set target value, say $0$, at all times $t$, where the distance is measured with respect to the quadratic loss function. The crucial assumption here is that the drift rate $b_t$ can only be selected from a bounded interval $[\theta_0,\theta_1],\;\theta_0<0<\theta_1$. While the problem formulation in the literature was usually chosen with discounting, an ergodic criterion is more natural for our purposes, yielding
\begin{align}\label{cost_func}
J(x,b) = \underset{T\to\infty}{\lim\sup} \;\E^x\left[\frac{1}{T}\int_0^T (X_s^b)^2ds\right], 
\end{align}
which the decision maker wants to minimize. If the decision maker is fully aware of the model, then the optimal control not surprisingly turns out to be of bang-bang type: The decision maker chooses a threshold $z$ and steers downwards ($b_t=\theta_0$) if $X_t>z$ and upwards otherwise ($b_t=\theta_1$). The optimally controlled process is therefore a Brownian motion with broken drift. The quadratic loss criterion induces a candidate for the optimal threshold: $z=\delta= \frac{1}{2\theta_0} + \frac{1}{2\theta_1}$, see Section \ref{Ch-ICP} for the verification.

Consider a scenario where a decision-maker controls a system using a steering wheel. The goal is to learn an optimal steering strategy without knowing the precise effects of the steering on the system. In our model, this translates to uncertainty about the system parameters $\theta_i,\,i=0,1$, preventing a direct application of the optimal threshold-based control with threshold $\delta$.  The decision-maker must simultaneously estimate these parameters and control the system effectively. However, parameter estimation is merely a means to an end. The ultimate objective is to develop a good data-driven control strategy, $b$.
As a criterion for evaluation of different strategies, we consider the growth of the expected regret 
\begin{align*}
R^b(T) := \E \int_0^T (X^{b}_t)^2 - (X^{*}_t)^2 dt,
\end{align*}
where $X^*$ is the optimally controlled process under full information, i.e., a Brownian motion with drift broken in $\delta$.

\subsection{Structure of the paper and main contributions}
We start by solving the ergodic bounded velocity follower problem \eqref{cost_func} with full information in Section \ref{Ch-ICP} using a standard guess-and-verify approach based on the Hamilton-Jacobi-Bellman equation. 


Section \ref{sec:learing_results} presents the data-driven algorithms, which are based on the observation that the time average of a Brownian motion with broken drift leads to a consistent estimator for the unknown parameter $\delta$. This leads to the explore-first($\tau$)-algorithm, which initially explores the system and then switches to a strategy based on the estimated parameter. Theorem \ref{thm 1} demonstrates that this algorithm achieves a regret rate of order $\sqrt{T}$. Our second algorithm, Adaptive Position Averaging with Clipping, improves upon the explore-first approach by updating the parameter estimate over time. This adaptive strategy leads to a significantly faster convergence rate of order $\log(T)$ (Theorem \ref{regret}). These results highlight the advantages of model-based reinforcement learning. By leveraging the underlying model structure, we can develop interpretable, efficient algorithms with provable convergence guarantees. These theoretical advantages can also be underlined by numerical comparisons, which show that the method presented here requires only a fraction of the data of model-free approaches such as deep-Q learning.

The theoretical foundation of our analysis rests on understanding the convergence rates of the underlying processes to their stationary distributions and the associated estimators. These results, which may also be of independent interest, are presented in Section \ref{PBBD}. 
The proofs of the regret rates are then provided in Section \ref{sec: proofs of regret rates}. These proofs rely on a meticulous decomposition of the regret into components related to the process's distance from the stationary distribution and the accuracy of the estimator.

\subsection{Related Literature}
Regret rates in continuous-time stochastic control problems have been extensively studied, particularly in recent years. While most research has focused on linear-quadratic (LQ) problems, some studies have explored other classes. \cite{duncan1999adaptive} proved asymptotic sublinear regret for LQ problems, but the exact order remains unspecified. \cite{basei2022logarithmic,guo2021reinforcement,szpruch2021exploration} extended this to episodic LQ problems, providing non-asymptotic bounds. \cite{neuman2023statistical} considered propagator models and achieved sublinear regrets. \cite{christensen2024data, christensen2023data, MR4746600, christensen21} examined singular and impulsive control problems in a non-parametric, ergodic setting, obtaining sublinear regret rates of power type. However, our setting and proof techniques differ significantly from previous work.

In addition, the statistical question of ML estimation of the drift parameters of a Brownian motion with broken drift was investigated in \cite{lejay2020maximum}, see also \cite{lejay2019threshold} for a relation to finance. We do not use these results here, as we are only interested in estimating $\delta$, which is more directly possible here and provides convergence results that are easier to handle for our purposes.

\section{The ergodic version of the bounded velocity follower problem}\label{Ch-ICP}



Let $\theta_0 < 0 < \theta_1$ be reals, $W$ a standard Brownian motion on some probability space $(\Omega, \cF, \P)$ and denote by $(\cF_t)_{t \ge 0}$ the filtration generated by $W$ and augmented by the $P$-null sets in $\cF$. We denote by $U$ the set of all 
$(\cF_t)$-adapted $[\theta_0,\theta_1]$-valued processes
such that given any initial value $x\in \R$, the process
\begin{equation}
\label{SDE-1}
dX_t^b = b_t\,dt + dW_t, \quad X^b_0 = x,
\end{equation}
fulfills
\[\sup_{t\geq 0} \E^x[\abs{X_t}^3] < \infty,\]
where we write $\E^x$ in order to underline the dependence of the expectation on the the initial state $x$.
We can interpret $X^b_t$ as the position of our system at time $t$, with the optimal position being the origin. Our goal is to find the optimal drift $b$ such that the long term average quadratic deviation from the origin
\begin{align}\label{cost_func}
J(x,b) := \underset{T\to\infty}{\lim\sup} \;\E^x\left[\frac{1}{T}\int_0^T (X_s^b)^2ds\right] 
\end{align}
 is minimized for all initial values $x\in \R$. The corresponding value function is given by
\begin{align}\label{value_func}
V(x) = \inf_{b\in U}J(x,b).
\end{align}
As we will see below, the initial value $x$ has no effect on $V$ since in the long term it does not matter where the system starts.

The solution of such ergodic control problems by means of Hamilton-Jacobi-Bellman (HJB) equations is standard today and this also works here as usual. We are therefore looking for a real number $\eta$ and a $C^2$ function $\varphi$ that fulfill the following equation:
\begin{align}\label{HJB-2}
0 = \inf_{b\in[\theta_0, \theta_1]}\mathcal{A}^b \varphi(x) + x^2 - \eta, \mbox{ where }\mathcal{A}^b\varphi(x) := b\frac{\partial \varphi}{\partial x}(x) + \frac{1}{2}\frac{\partial^2 \varphi}{\partial x^2}(x).
\end{align}
A short calculation shows that such a solution is given by
$
\eta = \frac{1}{4\theta_0^2} + \frac{1}{4\theta_1^2}
$
and 
\begin{displaymath}
\varphi(x) =
\begin{cases}
x\left(\frac{\eta}{\theta_0}-\frac{1}{2\theta_0^3}\right) + \frac{x^2}{2\theta_0^2} - \frac{x^3}{3\theta_0}, &x>\delta, \\
x\left(\frac{\eta}{\theta_1}-\frac{1}{2\theta_1^3}\right) + \frac{x^2}{2\theta_1^2} - \frac{x^3}{3\theta_1} + \frac{(\theta_0 - \theta_1) (\theta_0 + \theta_1)^3}{24\theta_0^4\theta_1^4}, &x\leq \delta,
\end{cases}
\end{displaymath}
where 
\begin{align}\label{defi delta}
\delta = \frac{1}{2\theta_0} + \frac{1}{2\theta_1} .
\end{align}

\begin{Th}\label{optimal_solution}
Let $\delta$ be defined as in \eqref{defi delta} and let
\begin{align*}\label{opt_b}
b^*(x) := 
\begin{cases}
\theta_0, &x>\delta, \\
\theta_1, &x\leq \delta,
\end{cases}
\end{align*}
$x \in \R$. Then for any $x_0 \in \R$ there exists a unique strong solution of the SDE $dX_t = b^*(X_t) dt + dW_t$ with initial condition $X_0 = x_0$. Moreover, the Markovian control $b^*_t:=b^*(X_t)$, $t \in [0, \infty)$, is optimal in $U$ and the value function satisfies  

\begin{displaymath}
V(x) = \eta = \frac{1}{4\theta_0^2} + \frac{1}{4\theta_1^2} = J(x, b^*).
\end{displaymath}
\end{Th}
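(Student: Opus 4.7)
The statement contains three pieces: (i) strong existence and uniqueness for the SDE driven by $b^\ast$, (ii) the identity $J(x,b^\ast)=\eta$, and (iii) optimality $V(x)\ge \eta$ against every competitor $b\in U$. I would run the classical HJB verification first and then return to the well-posedness and admissibility of the candidate control.

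Before the verification I would confirm that the pair $(\eta,\varphi)$ really solves \eqref{HJB-2}. On each half-line the corresponding piece of $\varphi$ satisfies the ODE $b\varphi'+\tfrac{1}{2}\varphi''+x^2-\eta=0$ with $b$ equal to the value prescribed by $b^\ast$ on that side. A short direct computation using $\delta=\tfrac{1}{2\theta_0}+\tfrac{1}{2\theta_1}$ shows $\varphi'(\delta-)=\varphi'(\delta+)=0$ and $\varphi''(\delta-)=\varphi''(\delta+)=-\tfrac{1}{\theta_0\theta_1}>0$, so $\varphi\in C^2(\R)$ and the additive constant in the second piece is fixed by continuity of $\varphi$ itself. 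Because $\varphi''>0$ on each half-line (inherited from the sign of the linear function $\varphi''$ there) and $\varphi'(\delta)=0$, we get $\varphi'(x)>0$ for $x>\delta$ and $\varphi'(x)<0$ for $x<\delta$; hence the pointwise infimum in \eqref{HJB-2} over $b\in[\theta_0,\theta_1]$ is attained exactly at $b=b^\ast(x)$.

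For the verification I would apply Ito's formula to $\varphi(X^b_t)$ for an arbitrary $b\in U$. The drift term is $\mathcal A^{b_t}\varphi(X^b_t)\ge \eta-(X^b_t)^2$ by HJB. Since $\varphi'$ grows at most quadratically and $\sup_{t\ge 0}\E^x|X^b_t|^3<\infty$ by the definition of $U$, the stochastic integral is a true martingale, so
\[
\E^x\varphi(X^b_T)\;\ge\;\varphi(x)+\eta T-\E^x\!\int_0^T (X^b_t)^2\,dt.
\]
Dividing by $T$ and noting that $|\varphi(y)|=O(|y|^3)$ together with the uniform third-moment bound forces $T^{-1}\E^x\varphi(X^b_T)\to 0$, so $J(x,b)\ge \eta$.

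The remaining, and in my view most delicate, step is to show that the candidate $b^\ast$ itself lies in $U$, so that equality is achieved along $b^\ast$. Strong existence and uniqueness for $dX_t=b^\ast(X_t)\,dt+dW_t$ follow from Zvonkin's or Veretennikov's theorem, the drift being bounded and Borel measurable and the diffusion constant and non-degenerate; alternatively, one can build the solution path-wise from the piecewise-constant structure of $b^\ast$. The genuinely work-intensive step is the uniform third-moment bound $\sup_{t\ge 0}\E^x|X^{b^\ast}_t|^3<\infty$: I would obtain it either by a Lyapunov argument using a function of polynomial growth whose extended generator is bounded above by $-c|x|^3+C$ outside a compact set, or by invoking the explicit stationary distribution of the broken-drift Brownian motion and the exponential ergodicity established in Section \ref{PBBD}. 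Once $b^\ast\in U$ is secured, the HJB inequality above turns into an equality along $X^{b^\ast}$, giving $J(x,b^\ast)=\eta$ and hence $V(x)=\eta$.
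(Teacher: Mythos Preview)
Your proposal is correct and follows essentially the same verification route as the paper: apply It\^o's formula to $\varphi(X^b)$, use the HJB inequality for the lower bound $J(x,b)\ge\eta$, kill the boundary term via the uniform third-moment assumption in $U$, and then check that the candidate $b^\ast$ lies in $U$ and turns the inequality into an equality. Your additional explicit check that $\varphi\in C^2$ with $\varphi'(\delta)=0$, $\varphi''(\delta)=-1/(\theta_0\theta_1)>0$, and hence that the minimiser in \eqref{HJB-2} is precisely $b^\ast$, is exactly the ``short calculation'' the paper alludes to before the theorem. For well-posedness the paper cites Proposition~5.17 in \cite{karatzas1991brownian} rather than Zvonkin/Veretennikov, and for admissibility it invokes Lemma~\ref{moments}, which in fact gives the stronger bound $\sup_{t\ge0}\E e^{c|X_t|}<\infty$ via the fundamental $r$-harmonic functions and a supermartingale argument---so your Lyapunov or ergodicity alternative is not needed, but would also work.
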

\begin{proof}
We first fix an arbitrary control $b\in U$.
Since the solution $X$ of \eqref{SDE-1} is an It\^o process, we can use  It\^o's formula which yields
\begin{displaymath}
\varphi(X_T^b) = \varphi(x) + \int_0^T\left( 
b_s\varphi'(X_s^b) + \frac{1}{2}\varphi''(X_s^b)\right) ds + \int_0^T\varphi'(X_s^b)dW_s.  
\end{displaymath}
Since $\varphi'$ grows as a polynomial of degree two, 
$b\in U$  and the third moments are uniformly bounded,
\begin{displaymath}
\left(\int_0^t\varphi'(X_s^b)dW_s\right)_{t\in[0,T)}
\end{displaymath}
is a square integrable martingale. Hence we have 
\begin{displaymath}
\E\varphi(X_T^b) = \varphi(x) + \E\left[\int_0^Tb(s,X_s^b)\varphi'(X_s^b) + \frac{1}{2}\varphi''(X_s^b)ds\right].
\end{displaymath} 
We now show that $\eta$ is a lower bound of the cost (\ref{cost_func}) by using the fact that the pair $(\varphi, \eta)$ is a solution to the Hamilton-Jacobi-Bellman equation (\ref{HJB-2}). We have
\begin{align*}
\frac{1}{T} \big( \E\varphi(X_T^b) - \varphi(x)\big) + \eta &= \frac{1}{T}\E\int_0^T(X_s^b)^2ds - \frac{1}{T}
\E\int_0^T \left( \mathcal{A}^b \varphi(s, X_s^b) + (X_s^b)^2 - \eta\right) ds \\
&\leq \frac{1}{T}\E\int_0^T(X_s^b)^2ds.
\end{align*}
Since $b\in U$ and $\varphi$ grows as a polynomial of degree $3$ we have 
\begin{displaymath}
\lim_{T\to\infty} \frac{1}{T} \E\varphi(X_T^b) = 0.
\end{displaymath}
Finally, taking the limes superior as $T\to \infty$, we have the lower bound
$
\eta \leq J(x,b).
$

It follows from well-known results (see, e.g., Proposition 5.17 in Chapter 5 of \cite{karatzas1991brownian}) that the SDE $dX_t = b^*(X_t) dt + dW_t$ has a unique strong solution $(X_t)$ for any initial value $x_0$. Lemma \ref{moments} below entails that the process $b^*_t = b^*(X_t)_{t\geq 0}$ lies in $U$. Finally, one can show that all the above inequalities become equations, so that
$
\eta = J(x,b^*),
$
i.e. $b^*$ is an optimal control and 
\begin{displaymath}
V(x) = \eta = \frac{1}{4\theta_0^2} + \frac{1}{4\theta_1^2}
\end{displaymath}
is the value of the control problem for all initial points $x\in \R$. 
\end{proof}

\section{Learning the optimal control}\label{sec:learing_results}

The optimal control function $b^*$ from Theorem \ref{optimal_solution} assigns to any state above the threshold $\delta$ the negative drift rate $\theta_0$ and to any state below $\delta$ the positive drift rate $\theta_1$. More generally, for any $z \in \R$ we define the \textit{threshold control function}
\begin{displaymath}
b_z(x) := 
\begin{cases}
\theta_0, &x > z, \\
\theta_1, &x \leq z. 
\end{cases}
\end{displaymath}
We call the (unique) solution $X^{z,s,x}$ of the stochastic differential equation
\begin{align}\label{Def5.1-1}
dX_t^{z,s,x} = b_z(X_t^{z,s,x})dt + dW_t, \quad X_s^{z,s,x} = x
\end{align}
with initial value $x\in \R$, $s\in [0,\infty)$ and $t \geq s$, 
a \textit{Brownian motion with broken drift}. 

 Note that since the function $b_z$ is bounded, measurable and locally integrable and (\ref{Def5.1-1}) is a time-homogeneous SDE, the existence  and uniqueness of a  strong solution is guaranteed (cf. Proposition 5.17 in Chapter 5 of \cite{karatzas1991brownian}). 

Suppose that the agent does not know the true values $\theta_0$ and $\theta_1$, and hence also not the optimal threshold level $\delta$ as defined  in Theorem \ref{optimal_solution}. Moreover, assume that the agent can observe the controlled state process and thus estimate the parameter $\theta_0$ and $\theta_1$ and the optimal threshold level $\delta$. It turns out that a suitable estimator for the optimal threshold level can be constructed with the time average of the state process if the state is controlled with a threshold control. More precisely, if the agent chooses the threshold control $b_z$ for $z \in\R$, then the time average $\frac{1}{T} \int_0^T X^{z,0,x}_t dt$ converges to $z-\delta$, as $T \to \infty$ (see Proposition \ref{LLNcont} below). Thus one can use $-\frac{1}{T} \int_0^T X^{z,0,x}_t dt + z$ as an estimator of the optimal threshold $\delta$. This motivates the following explore-first algorithm.

\vspace{.3cm}
\noindent
\textbf{Algorithm (Explore-first($\tau$))}
\begin{enumerate}
\item Choose $\tau \in (0,\infty)$. Control the state with Markovian control given by $b_0$ on $[0, \tau)$. 
\item At $\tau$ compute the estimator 
\begin{align*}
\hat \delta : = -\frac{1}{\tau} \int_0^\tau X^{0,0,x}_s ds. 
\end{align*}
\item Control the state with $b_{\hat \delta}$ on $[\tau, \infty)$. 
\end{enumerate}
We refer to $\tau$ as the length of the learning interval. Moreover, we denote by $(Y^\tau_t)$ the state dynamics if the process is controlled with the explore-first algorithm and if the initial state is $0$. We can rigorously define the process by combining strong solutions of SDEs. To this end let $W$ be a Brownian motion on some probability space $(\Omega, \cF, \P)$. Moreover, for all $(z,t,x)\in \R \times [0, \infty) \times \R$ let $X^{z,t,x}$ be the unique strong solution of the SDE \eqref{SDE-1} on $[t, \infty)$ with threshold drift $b_z$ and initial condition $x$. We choose versions of the solutions such that the mapping $(z,t,x) \mapsto X^{z,t,x}$ is a measurable flow of SDEs. Now on the interval $[0, \tau)$ we set $Y^\tau$ equal to $X^{0,0,0}_t$, and on $[\tau, \infty)$ we set $Y^\tau$ equal to $ X^{\hat \delta, \tau, X^0_\tau}$. 

The regret at any time $T \ge 0$ of the explore-first is given by 
\begin{align*}
R^{\text{ex first}}(T) := \E \int_0^T (Y^\tau_t)^2 - (X^{\delta}_t)^2 dt. 
\end{align*}
Our first result is the following. 
\begin{Th}\label{thm 1}
There exists a constant $C \in (0,\infty)$, depending only on $\theta_0$ and $\theta_1$, such that for all $T \in (0, \infty)$ the regret of the explore-first with learning interval length $\tau = \sqrt{T}$ satisfies \[R^{\text{ex first}}(T) \le C (1+\sqrt{T}).\] 
\end{Th}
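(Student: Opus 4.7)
The plan is to split the regret into exploration and exploitation contributions,
\begin{align*}
R^{\text{ex first}}(T) = \E\int_0^\tau \bigl[(Y^\tau_t)^2 - (X^\delta_t)^2\bigr]\,dt + \E\int_\tau^T \bigl[(Y^\tau_t)^2 - (X^\delta_t)^2\bigr]\,dt,
\end{align*}
and to bound each piece separately. On $[0,\tau]$ the state follows $Y^\tau_t = X^{0,0,0}_t$ under the bounded-drift policy $b_0$. A uniform moment bound of the form $\sup_{t\ge 0}\E[(X^{0,0,0}_t)^2] + \sup_{t\ge 0}\E[(X^\delta_t)^2] < \infty$ (in the spirit of Lemma \ref{moments}, relying on the invariant measure of the broken-drift diffusion) then bounds the exploration contribution by a constant times $\tau = \sqrt{T}$.

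For the exploitation phase I write $J(z)$ for the long-run average cost of the threshold policy $b_z$; this limit exists and is independent of the starting point by the ergodic analysis of Section \ref{PBBD}, and $J(\delta) = \eta$ by Theorem \ref{optimal_solution}. Conditioning on $\cF_\tau$ and using $Y^\tau_t = X^{\hat\delta,\tau,Y^\tau_\tau}_t$ for $t\ge \tau$, the plan is to invoke a quantitative geometric ergodicity estimate of the form
\begin{align*}
\left|\E\bigl[(X^{z,\tau,y}_t)^2 \,\bigm|\, \cF_\tau\bigr] - J(z) \right| \le C\,p(y)\,e^{-\gamma(t-\tau)},
\end{align*}
uniform in $z\in\R$, for some polynomial $p$ — the ergodic constants of the broken-drift diffusion are translation-invariant in $z$. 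Integrating over $[\tau,T]$ and using uniform polynomial moment bounds on $Y^\tau_\tau$ from the exploration step, one obtains $\E\int_\tau^T (Y^\tau_t)^2\,dt = (T-\tau)\,\E[J(\hat\delta)] + O(1)$, with the analogous identity for $X^\delta$, so the exploitation regret reduces to $(T-\tau)\,\E[J(\hat\delta) - J(\delta)] + O(1)$.

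Two further inputs complete the argument. First, an explicit computation of the double-exponential invariant density of the broken-drift diffusion shows that its stationary variance is independent of the threshold, yielding the exact identity $J(z) = \eta + (z-\delta)^2$, and in particular the uniform quadratic bound $J(z) - J(\delta) = (z-\delta)^2$. Second, writing $\hat\delta - \delta = -\frac{1}{\tau}\int_0^\tau (X^{0,0,0}_s + \delta)\,ds$ as a centered ergodic average (cf. Proposition \ref{LLNcont}) and invoking exponential covariance decay of $X^{0,0,0}$ produces the mean-square rate $\E[(\hat\delta-\delta)^2] \le C/\tau$. Combining these gives an exploitation regret of at most $C(T-\tau)/\tau + O(1) = O(T/\tau)$, and the choice $\tau = \sqrt{T}$ balances the two contributions to yield $O(\sqrt T) + O(\sqrt T) + O(1) = O(1+\sqrt T)$.

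The main obstacle is to make the two inputs from Section \ref{PBBD} quantitative and uniform in $z$: a geometric ergodicity estimate for the broken-drift diffusion with explicit control of the dependence on the initial point, and a $1/\tau$ mean-square rate for the ergodic-average estimator. Uniformity in $z$ is what makes the argument work despite $\hat\delta$ being random and a priori unbounded, and both inputs ultimately rest on the same exponential mixing of Brownian motion with broken drift.
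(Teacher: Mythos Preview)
Your outline is broadly correct and reaches the same $O(\sqrt T)$ bound, but the route differs from the paper's in two places worth flagging.

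First, the paper never establishes geometric ergodicity. Proposition \ref{stationary} gives only polynomial rates $|\E[(X^{z,s,x}_t)^2]-\mu_z^2|\le C(t-s)^{-\alpha/2}$ for all $\alpha\ge 1$, obtained via a coupling argument with $\E[\tau^\alpha]<\infty$. This already suffices for your purposes (take $\alpha>2$ and the time integral over $[\tau,T]$ is $O(1)$), so the exponential rate is not needed; but as stated, your ergodicity input is stronger than what Section \ref{PBBD} delivers and would require separate justification.

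Second, and more substantively, your claimed bound $\bigl|\E[(X^{z,\tau,y}_t)^2\mid\cF_\tau]-\mu_z^2\bigr|\le C\,p(y)\,e^{-\gamma(t-\tau)}$ \emph{cannot} be uniform in $z\in\R$ with a prefactor depending on $y$ alone: translation invariance makes the deviation a function of $y-z$, and expanding the square introduces an additional factor of $|z|$ (cf.\ Lemma \ref{lemma aug 16}). The paper sidesteps this entirely by a different device: it introduces the first time $\sigma\ge\tau$ at which $Y^\tau$ hits the random threshold $\hat\delta$, bounds the cost on $[\tau,\sigma]$ by an explicit hitting-time identity for Brownian motion with drift (Lemma \ref{lemma joint distri}), and only afterwards invokes ergodic convergence for the process started \emph{at} its threshold, where Lemma \ref{lemma aug 16} reduces everything to the fixed process $X^{0,0,0}$. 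Your direct approach is salvageable---since both $Y^\tau_\tau$ and $\hat\delta$ have all polynomial moments bounded by constants depending only on $\theta_0,\theta_1$ (Lemma \ref{moments} and Proposition \ref{LLNcont}), a prefactor of the form $p(|y|+|z|)$ is enough after taking expectations---but the uniformity claim as written is incorrect and should be repaired. The remaining ingredients (the identity $\mu_z^2-\mu_\delta^2=(z-\delta)^2$ from Lemma \ref{property_stationary_distr} and the estimator rate $\E[(\hat\delta-\delta)^2]\le C/\tau$ from Proposition \ref{LLNcont}) are exactly those the paper uses.
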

\begin{proof}
See Section \ref{sec: proofs of regret rates}. 
\end{proof}
In order to achieve an optimal growth rate the explore-first needs to choose different learning interval lengths for different time horizons $T$. To make the algorithm independent of any time horizon, one can use the standard doubling technique: one applies the explore-first consecutively along intervals whose lengths double step by step. By doing so, one obtains an algorithm with a regret rate of the order $\mathcal{O}(\log(T) \sqrt{T})$. An objection against explore-first and its doubling version is that the length of the observation interval for the estimator grows only like $\sqrt{T}$. The following Adaptive Position Averaging with Clipping (APAC) algorithm remedies this issue by using estimations over intervals with length growing linearly in time. 

\vspace{.3cm}
\noindent
\textbf{Algorithm (Adaptive Position Averaging with Clipping)}
\begin{enumerate}
\item Initialize the values $k = 0$,  $\hat{\delta}_0 = 0$, $\tau_0 = 0$ and choose $K\in \R$.

    \item Set $\tau_{k+1} = 2^{k+1}$ .\label{loop}
    \item Control the state with $b_{\hat{\delta}_{k}}$ on $[\tau_k, \tau_{k+1})$. Denote the controlled state on $[\tau_k, \tau_{k+1})$ by $X^{\hat \delta_k}$.
    \item At $\tau_{k+1}$ compute the estimator 
    \begin{displaymath}
        \hat{\delta}_{k+1} = \hat{\delta}_k - \frac{1}{\Delta\tau_k}\int_{\tau_k}^{\tau_{k+1}} X_s^{\hat{\delta}_k}ds.
    \end{displaymath}
    \item If the estimator $\hat{\delta}_{k+1}$ exceeds $K$ or $-K$, we set \[\hat{\delta}_{k+1} \leftarrow \sign(\hat{\delta}_{k+1}) K. \]
    \item  Set $k \leftarrow k+1$ and go to step \ref{loop}.
\end{enumerate}

Indeed the APAC algorithm offers a significant improvement over the explore first algorithm by mitigating the exploitation phase. 
For technical reasons we assume that the learner knows from the very beginning that the threshold level lies in between some $-K$ and $K$ respectively. The learner uses this assumption in step 5 of the APAC algorithm. Observe that one knows that $\delta\in[-K, K]$ holds if one knows that $\abs{\theta_i}> 1/K$ for $i= 0, 1$.

Let $(X^{\hat{\delta}}_t)_{t\geq 0}$ be the state process when applying the APAC algorithm. We will formally define the process $(X^{\hat{\delta}}_t)_{t\geq 0}$ in Section \ref{sec: proofs of regret rates}. The regret at any time $T \ge 0$ of the APAC algorithm is given by 
\begin{align*}
R^{\text{APAC}}(T) := \E \int_0^T (X^{\hat{\delta}}_t)^2 - (X^{\delta,0,x}_t)^2 dt. 
\end{align*}
Our second result is the following.

 \begin{Th}\label{regret}
 Suppose that $K\geq \abs{\delta}$. Then there exists a constant $C>0$, depending on $\theta_0$, $\theta_1$ and $K$ only, such that the expected regret of the APAC algorithm satisfies
\begin{align}
R^{\text{APAC}}(T) \leq C(1 + \log{T})\label{regret-eq}
\end{align}
for all $T\geq 1$. 
\end{Th}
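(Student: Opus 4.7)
The plan is to decompose the total regret along the dyadic exploration intervals $[\tau_k, \tau_{k+1})$ with $\tau_k = 2^k$ and $\Delta\tau_k = 2^k$, and to show that each interval contributes at most a uniform constant to the regret. Since $[0,T]$ is covered by at most $\lceil\log_2 T\rceil + 1$ such intervals, summing will then yield the $O(\log T)$ bound. As a preliminary step I would use the ergodic estimates for Brownian motion with broken drift provided by Section \ref{PBBD} to show that the benchmark satisfies $\E\int_0^T (X^{\delta,0,x}_t)^2\,dt = T\eta + O(1)$, so it will suffice to prove
\[ \E\!\int_{\tau_k}^{\tau_{k+1}} (X^{\hat\delta_k}_t)^2\,dt \;\le\; \Delta\tau_k\,\eta + C \qquad \text{uniformly in } k \ge 0. \]

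For a fixed threshold $z \in [-K,K]$, the diffusion driven by $b_z$ admits an explicit stationary distribution $\pi_z$ which is a translate by $z$ of a two-sided exponential law; a short computation gives $J(z) := \int x^2\,\pi_z(dx) = \eta + (z-\delta)^2$ identically. Combined with an exponential ergodic rate estimate from Section \ref{PBBD}, applied conditionally on $\cF_{\tau_k}$ to the piece of $X^{\hat\delta_k}$ on $[\tau_k,\tau_{k+1})$ starting from the random endpoint of the previous segment, this yields
\[ \E\!\left[\int_{\tau_k}^{\tau_{k+1}} (X^{\hat\delta_k}_t)^2\,dt \,\bigg|\, \cF_{\tau_k}\right] \;\le\; \Delta\tau_k\,\eta + \Delta\tau_k\,(\hat\delta_k - \delta)^2 + C_1. \]
The clipping in step 5 together with $|\delta|\le K$ ensures $|\hat\delta_k|\le K$ almost surely, so $C_1$ can be chosen uniformly.

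Next I would show $\E[(\hat\delta_k - \delta)^2] \le C_2 \cdot 2^{-k}$ for $k \ge 1$, so that after multiplication by $\Delta\tau_k = 2^k$ the leading term above is bounded by a constant. Writing the update as $\hat\delta_{k+1} - \delta = (\hat\delta_k - \delta) - \tfrac{1}{\Delta\tau_k}\int_{\tau_k}^{\tau_{k+1}} X^{\hat\delta_k}_s\,ds$ and using the stationary-mean identity $\int x\,\pi_z(dx) = z - \delta$ shows that, modulo a transient bias of order $\Delta\tau_k^{-1}$ from the non-stationary initial condition, the conditional mean of $\hat\delta_{k+1}-\delta$ given $\cF_{\tau_k}$ vanishes. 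A quantitative ergodic variance estimate of the form $\Var\bigl(\tfrac{1}{T}\int_0^T X^z_s\,ds\bigr) \le C_3/T$, uniform in $z\in[-K,K]$ and in the initial distribution, then bounds the conditional variance by $C_3/\Delta\tau_k$. Since $\delta\in[-K,K]$, the clipping step can only decrease $|\hat\delta_{k+1}-\delta|$ and therefore preserves the bound.

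Combining the two steps, the expected per-interval regret is at most $\Delta\tau_k\cdot \E[(\hat\delta_k - \delta)^2] + C_1 \le C_2 + C_1$, independent of $k$. Summing over the $O(\log T)$ intervals intersecting $[0,T]$ yields the required $R^{\text{APAC}}(T) \le C(1+\log T)$. The main obstacle I expect is the uniform-in-$z$ quantitative ergodic theory underpinning both the exponential convergence to $\pi_z$ and the $O(1/T)$ variance bound for the time average; the constants must not blow up as $z$ varies in $[-K,K]$ or as the random initial distribution inherited from the previous interval changes. This is exactly what Section \ref{PBBD} is designed to supply, and the clipping is the device that keeps the estimator (and hence the threshold used on the next interval) inside $[-K,K]$ so that those uniform constants remain applicable.
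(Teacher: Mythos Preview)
Your proposal is essentially correct and follows the same architecture as the paper: the same dyadic decomposition into $N = O(\log T)$ intervals, the same three-way split of the regret into (i) non-stationarity under $\hat\delta_k$, (ii) the stationary cost gap $\mu_{\hat\delta_k}^2 - \mu_\delta^2 = (\hat\delta_k-\delta)^2$, and (iii) non-stationarity of the benchmark, and the same key estimate $\E[(\hat\delta_k-\delta)^2] \le C\,2^{-k}$ coming from the $O(1/T)$ variance of the time average (Proposition~\ref{LLNcont}, part~4). Your way of seeing the cancellation---conditional mean of $\hat\delta_{k+1}-\delta$ vanishes because $\int x\,\pi_z(dx)=z-\delta$---is exactly what the paper encodes via the distributional identity $X^z \stackrel{d}{=} X^0 + z$, which collapses the recursion to $\hat\delta_{k+1} \stackrel{d}{=} -\frac{1}{\Delta\tau_k}\int_{I_k} X^0_t\,dt$.

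Two places where your sketch is a bit optimistic. First, Section~\ref{PBBD} does not prove exponential ergodicity; Proposition~\ref{stationary} gives only the polynomial rate $t^{-\alpha/2}$ for every $\alpha\ge 1$. This still suffices (take $\alpha>2$ so the integrated transient is bounded), but you should not invoke an exponential rate you don't have. Second, and more substantively, the constants in Proposition~\ref{stationary} and in Proposition~\ref{LLNcont}(4) depend on the initial point $x$ (the latter explicitly as $1+|x|^4$), so to get $C_1$ and $C_3$ uniform in $k$ you need a uniform bound on $\E[|X_{\tau_k}^{\hat\delta_{k-1}}|^p]$. Clipping only controls $\hat\delta_k$, not the endpoint of the previous segment. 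Section~\ref{PBBD} does not supply this directly; the paper handles it inside the proof by a comparison with two reflected Brownian motions $Y^\pm$ with drifts $\theta_0,\theta_1$ reflected at $\pm K$, yielding $Y^-_t \le X_t^{\hat\delta_k} \le Y^+_t$ and hence $\sup_t \E[|X_t^{\hat\delta_k}|^p]\le C(\theta_0,\theta_1,K)$. You will need some device of this kind to close the argument.
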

\begin{proof}
    See Section \ref{sec: proofs of regret rates}. 
\end{proof}

\section{Properties of Brownian motion with broken drift}\label{PBBD}

In this section we collect some properties of Brownian motion with broken drift. The structure is as follows: We first compute the density of the stationary distribution of (\ref{Def5.1-1}). Following this, we turn to the convergence rate at which the process (\ref{Def5.1-1}) starting at some initial point $x\in \R$ becomes stationary. This will play a crucial role for providing the bounds of the regret of our algorithms that estimate the optimal switching point $\delta$ for the control $b_z$. Lastly, we provide a collection of useful properties of Brownian motion with broken drift.

Throughout this section we denote by $X^z = X^{z,0,0}$ the state process controlled with threshold drift $b_z$ starting in zero at time zero. 





\begin{Lemma}\label{Lem-5.2}
The function 
\begin{displaymath}
f(x) = \frac{1}{\bar{\delta}}
\begin{cases}
e^{2\theta_0(x - z)}, &x > z,\\
e^{2\theta_1(x - z)}, &x \leq z
\end{cases}
\end{displaymath}
with $\bar{\delta} = \frac{1}{2}\big(\frac{1}{\theta_1} - \frac{1}{\theta_0}\big)$, 
 is the density of the stationary distribution of (\ref{Def5.1-1}). 
\end{Lemma}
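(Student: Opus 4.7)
The plan is to identify $f$ as the unique positive, normalized solution of the stationary Fokker--Planck equation associated with the generator $\mathcal{A}^{b_z} = b_z(x)\partial_x + \tfrac{1}{2}\partial_x^2$ of the Brownian motion with broken drift, and then to invoke uniqueness of the invariant measure to conclude.

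First I would write the stationary equation in divergence form,
\[
\tfrac{1}{2} f''(x) - \bigl(b_z(x) f(x)\bigr)' = 0,
\]
interpreted distributionally on all of $\R$ so as to accommodate the jump of $b_z$ at $z$. Integrating once in $x$ yields $\tfrac{1}{2} f'(x) - b_z(x) f(x) = c$ for some constant $c$, and integrability of $f$ forces $f, f' \to 0$ at $\pm\infty$, so $c = 0$. Away from $z$ this reduces to the first-order linear ODE $f'(x) = 2 b_z(x) f(x)$, whose general solution is $f(x) = A e^{2\theta_0 x}$ on $\{x > z\}$ and $f(x) = B e^{2\theta_1 x}$ on $\{x \le z\}$; both pieces are integrable on their respective half-lines because $\theta_0 < 0 < \theta_1$.

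Two conditions remain to pin down $A$ and $B$. Integrating the Fokker--Planck equation across $z$ shows that $f$ is continuous at $z$ (only $f'$ may jump, exactly absorbing the jump of $b_z$), which yields $A e^{2\theta_0 z} = B e^{2\theta_1 z} =: C$. The normalization $\int_{\R} f(x)\,dx = 1$ evaluates to
\[
C\Bigl(\tfrac{1}{2\theta_1} - \tfrac{1}{2\theta_0}\Bigr) = C\,\bar\delta = 1,
\]
so $C = 1/\bar\delta$, and rewriting the exponentials relative to $z$ reproduces the formula stated in the lemma.

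Finally, to argue that $f$ truly is the density of the invariant distribution of $X^z$ (not merely a candidate solving Fokker--Planck), I would observe that $b_z$ is bounded and has the mean-reverting sign pattern $b_z = \theta_1 > 0$ on $\{x \le z\}$ and $b_z = \theta_0 < 0$ on $\{x > z\}$; classical one-dimensional diffusion theory then guarantees positive recurrence on $\R$ and a unique invariant probability measure, which must coincide with $f(x)\,dx$. The main obstacle is purely formal: justifying the weak Fokker--Planck equation across the discontinuity of $b_z$ and identifying the correct matching condition (continuity of $f$, with a jump in $f'$). A clean alternative that sidesteps this is to invoke the speed-measure formula, under which the invariant density is proportional to $\exp\bigl(2\int_{z}^{x} b_z(u)\,du\bigr)$; evaluating this piecewise in $x$ gives the same expression up to the constant $1/\bar\delta$ fixed by normalization.
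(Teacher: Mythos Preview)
Your proof is correct. Your primary route solves the stationary Fokker--Planck equation directly, handling the discontinuity of $b_z$ at $z$ distributionally and extracting the matching condition (continuity of $f$, jump in $f'$) from the first integral $\tfrac{1}{2}f' - b_z f = c$; the paper instead works via scale function and speed measure, computing $s'(x) = \exp(-2(x-z)b_z(x))$, checking that the speed measure $m(dx) = (2/s'(x))\,dx$ has finite total mass $m(\R) = \tfrac{1}{\theta_1} - \tfrac{1}{\theta_0}$, and then invoking a standard result (Kallenberg, Lemma~33.19) that for a one-dimensional diffusion with finite speed measure the invariant density is the normalized speed density. Your closing ``clean alternative'' is precisely this argument. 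The speed-measure route is more economical here because it bypasses the distributional analysis across $z$ and delivers existence, uniqueness, and the explicit density in one stroke; your Fokker--Planck argument is more self-contained and makes the interface condition at $z$ explicit, at the cost of a small technical wrinkle (integrability of $f$ alone does not literally force $f,f'\to 0$ at $\pm\infty$; you really use the exponential form of the piecewise solutions to rule out $c\neq 0$).
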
 
\begin{proof}
We proceed by showing that the speed measure is finite. 
The associated scale function to the solution of (\ref{Def5.1-1}) is given by
\begin{displaymath}
s(x) = \int_z^x \exp \left(-2\int_z^y b_z(\zeta)d\zeta\right)dy =
\begin{cases}
\int_z^x \exp(-2(y-z)\theta_0)dy, &x \geq z, \\
\int_z^x \exp(-2(y-z)\theta_1)dy, &x < z.
\end{cases}
\end{displaymath}
The scale function is differentiable with its derivative given by
$
s'(x) = \exp(-2(x-z)b_z(x))
$
and the speed measure $m$ is given by the density 
$
x \mapsto 
2/s'(x)=2\cdot\exp(2(x-z)b_z(x)).
$
Since 
\begin{align*}
m(\R) &= 2\int_\R \exp(2(x-z)b_z(x))dx 
	= \frac{1}{\theta_1} - \frac{1}{\theta_0} < \infty
\end{align*}
is finite, by \cite[Lemma 33.19]{kallenberg2002foundations} we have that the density of the stationary distribution is then given by
\begin{displaymath}
f(x) = \frac{2}{m(\R)} \exp(2(x-z)b_z(x)) = \frac{1}{\overline{\delta}}\exp(2(x-z)b_z(x)).
\end{displaymath} 
\end{proof}

\begin{Lemma}\label{property_stationary_distr}
The second moment $\mu_z^2$ of the stationary distribution exists and satisfies 
\begin{displaymath}
\mu_z^2 - \mu_\delta^2 = (z - \delta)^2,
\end{displaymath}
where $\bar{\delta} = \frac{1}{2}\big(\frac{1}{\theta_1} - \frac{1}{\theta_0}\big)$ is as above. 
\end{Lemma}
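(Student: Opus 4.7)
My plan is to reduce the statement to a single moment computation by exploiting the translation structure of the stationary density $f$ from Lemma \ref{Lem-5.2}. Writing $f_z$ for the density with threshold $z$, the explicit formula shows $f_z(x) = f_0(x - z)$, so if $\xi$ is a random variable with density $f_0$, then $\xi + z$ has density $f_z$. Existence of $\mu_z^2$ follows immediately from the exponential decay of $f_0$ at $\pm\infty$ (since $\theta_0 < 0 < \theta_1$), which guarantees that $\xi$ has moments of all orders.

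Once existence is in hand, I would expand
\begin{displaymath}
\mu_z^2 = \E[(\xi + z)^2] = \E[\xi^2] + 2z\,\E[\xi] + z^2,
\end{displaymath}
and subtract the corresponding identity for $z = \delta$. The $\E[\xi^2]$ terms cancel and one obtains
\begin{displaymath}
\mu_z^2 - \mu_\delta^2 = 2\E[\xi](z - \delta) + (z^2 - \delta^2) = (z - \delta)\bigl(2\E[\xi] + z + \delta\bigr),
\end{displaymath}
so the claim $\mu_z^2 - \mu_\delta^2 = (z-\delta)^2$ is equivalent to the single identity $\E[\xi] = -\delta$.

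It therefore remains to verify $\E[\xi] = -\delta$. This is a direct computation: splitting the integral at the origin,
\begin{displaymath}
\E[\xi] = \tfrac{1}{\bar\delta}\!\left[\int_0^\infty x\, e^{2\theta_0 x}\,dx + \int_{-\infty}^0 x\, e^{2\theta_1 x}\,dx\right] = \tfrac{1}{\bar\delta}\!\left[\tfrac{1}{4\theta_0^2} - \tfrac{1}{4\theta_1^2}\right],
\end{displaymath}
using the standard formulas $\int_0^\infty x e^{-ax}dx = 1/a^2$ with $a = -2\theta_0 > 0$ and $a = 2\theta_1 > 0$, respectively. Substituting $\bar\delta = (\theta_0 - \theta_1)/(2\theta_0\theta_1)$ and simplifying yields $\E[\xi] = -(\theta_0 + \theta_1)/(2\theta_0\theta_1) = -\delta$, which completes the proof.

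\textbf{Main obstacle.} There is no serious obstacle: the argument is essentially bookkeeping, and the only genuine computation is the evaluation of $\E[\xi]$. The one point that deserves care is the sign conventions and the simplification showing $\E[\xi]$ matches $-\delta$ exactly, which is why I would carry out that algebra explicitly rather than waving at it.
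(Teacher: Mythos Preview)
Your proof is correct and slightly more economical than the paper's. The paper computes the second moment $\mu_z^2$ directly by integrating $x^2 f(x)$ via integration by parts, obtains the closed form $\mu_z^2 = z^2 - 2\delta z + 2\eta + \tfrac{1}{2\theta_0\theta_1}$, and then subtracts the expression at $z=\delta$. You instead exploit the translation property $f_z(x) = f_0(x-z)$ from the outset, so the $\E[\xi^2]$ terms cancel automatically and the entire lemma reduces to the single first-moment identity $\E[\xi] = -\delta$, which is a shorter integral. What the paper's approach buys is the explicit value of $\mu_z^2$ (in particular $\mu_\delta^2 = \eta$), though this is not actually invoked later; what your approach buys is that it isolates the conceptual reason the difference is a perfect square and avoids the second-moment integration altogether. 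Both arguments are elementary and valid.
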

\begin{proof}
With partial integration we have
\begin{displaymath}
\int x^2 e^{2\theta(x-z)} dx = e^{2\theta(x-z)} \left(\frac{x^2}{2\theta} - \frac{x}{2\theta^2} + \frac{1}{4\theta^3}\right) +C
\end{displaymath}
for $C\in \R$. As above let $\bar{\delta} := \frac{1}{2}\left(\frac{1}{\theta_1} - \frac{1}{\theta_0} \right)$. Since 
\begin{align*}
e^{2\theta(x-z)} \left(\frac{x^2}{2\theta} - \frac{x}{2\theta^2} + \frac{1}{4\theta^3}\right) \underset{x\to\infty}{\longrightarrow} 0, \qquad &\textrm{for } \theta < 0, \\
e^{2\theta(x-z)} \left(\frac{x^2}{2\theta} - \frac{x}{2\theta^2} + \frac{1}{4\theta^3}\right) \underset{x\to -\infty}{\longrightarrow} 0, \qquad &\textrm{for } \theta > 0,
\end{align*}
we have
\begin{align*}
\mu_z^2 &= \int_{-\infty}^\infty x^2 f(x) dx \\
	&= \frac{1}{\overline{\delta}} \int_{-\infty}^z x^2 e^{2\theta_1(x-z)} dx + \frac{1}{\overline{\delta}} \int_{z}^\infty x^2 e^{2\theta_0(x-z)} dx \\
	&= \frac{1}{\overline{\delta}} \left(\left(\frac{1}{2\theta_1} - \frac{1}{2\theta_0}\right)z^2 - \frac{1}{2}\left(\frac{1}{\theta_1^2} - \frac{1}{\theta_0^2}\right)z + \frac{1}{4\theta_1^3} - \frac{1}{4\theta_0^3}\right) \\
	&= z^2 -2\delta z + \frac{\theta_0\theta_1(\theta_0 - \theta_1)(\theta_0^2 + \theta_1^2 + \theta_0\theta_1)}{2(\theta_0 - \theta_1)\theta_0^3\theta_1^3} \\
	&= z^2 -2\delta z + 2\eta + \frac{1}{2\theta_0\theta_1}.
\end{align*}
Now since
\begin{displaymath}
\mu_z^2 - \mu_\delta^2 = z^2 -2\delta z - \delta^2 + 2\delta^2 = (z - \delta)^2,
\end{displaymath}
the claim follows. 
\end{proof}

We now want to estimate the speed at which the process becomes stationary. To this end we consider the two processes $X$ and $Y$ satisfying the following system of stochastic differential equations
\begin{displaymath}
d
\begin{pmatrix}
X_t \\
Y_t
\end{pmatrix}
= 
\begin{pmatrix}
b_z(X_t)\\
b_z(Y_t)
\end{pmatrix}
dt + 
\begin{pmatrix}
1 \\
1
\end{pmatrix}
dW_t, 
\end{displaymath}
with $X_0 = x \in \R$ and $Y_0 \sim \mu$ where $\mu$ denotes the stationary distribution of $X$.  By standard theory, there exists a strong solution $(X,Y)$ satisfying the strong Markov property.  For these, we consider the \textit{coupling time} $\tau = \inf\{t\geq 0 : X_t = Y_t\}$.
Since the random variables  $X_t$ and $Y_t$ coincide on $\{t \geq \tau\}$, the Cauchy-Schwarz and Minkowski equality yield 
\begin{align}
\E[X_t^2 - Y_t^2] &= \E[(X_t^2 - Y_t^2)1_{\{t < \tau\}}] \leq \sqrt{\E\left(X_t^2 - Y_t^2\right)^2}\cdot \sqrt{\P(t<\tau)}\nonumber\\
&\leq \left( 
\sup_{t\geq 0} \sqrt{\E[X_t^4]} + \sup_{t\geq 0} \sqrt{\E[Y_t^4]} \label{5-1}
\right)\sqrt{\P(t<\tau)}.
\end{align}
By proving that the right hand side of (\ref{5-1}) is finite, we show that $X$ converges to its stationary distribution. 

We begin by showing that the moments of the solution to (\ref{Def5.1-1}) exist. 

\begin{Lemma}\label{moments}
Let $X = X^{z, 0, x}$ be the solution of (\ref{Def5.1-1}) with initial value $x\in \R$. Then it holds
\begin{displaymath}
\sup_{t\geq 0} \E[e^{c\abs{X_t}}] < \infty
\end{displaymath}
for $c < 2\min(\abs{\theta_0}, \theta_1)$.
\end{Lemma}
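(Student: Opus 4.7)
The plan is to use a Foster–Lyapunov drift argument with the smooth test function
\[
V(x) = \cosh(cx) = \tfrac{1}{2}(e^{cx} + e^{-cx}),
\]
which is $C^\infty$, nonnegative, and satisfies $\tfrac{1}{2}e^{c|x|} \le V(x) \le e^{c|x|}$. Bounding $\sup_t \E[V(X_t)]$ therefore gives the desired exponential moment bound up to the constant factor $2$.

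First I would compute the infinitesimal generator of \eqref{Def5.1-1} applied to $V$:
\[
\mathcal{A}V(x) = c\,b_z(x)\sinh(cx) + \tfrac{c^2}{2}\cosh(cx).
\]
For $x \to +\infty$, $b_z(x) = \theta_0$ and $\sinh(cx) \sim \tfrac{1}{2}e^{cx}$, $\cosh(cx) \sim \tfrac{1}{2}e^{cx}$, so
\[
\mathcal{A}V(x) \sim \tfrac{c}{2}\bigl(\theta_0 + \tfrac{c}{2}\bigr) e^{cx},
\]
and $\theta_0 + c/2 < 0$ iff $c < 2|\theta_0|$. Symmetrically, for $x \to -\infty$, $b_z(x) = \theta_1$ and $\sinh(cx) \sim -\tfrac{1}{2}e^{-cx}$, giving $\mathcal{A}V(x) \sim \tfrac{c}{2}(-\theta_1 + \tfrac{c}{2})e^{-cx}$, which is negative iff $c < 2\theta_1$. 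Under the hypothesis $c < 2\min(|\theta_0|,\theta_1)$, both asymptotics are negative of order $V(x)$. Since $\mathcal{A}V$ is continuous (piecewise smooth with a bounded jump in $b_z\sinh(cx)$ at $x=z$ that still leaves it locally bounded), I can pick $\alpha > 0$ and $K < \infty$, depending only on $c, \theta_0, \theta_1, z$, such that the Lyapunov drift inequality
\[
\mathcal{A}V(x) \le -\alpha V(x) + K \qquad \text{for all } x \in \R
\]
holds.

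Next I apply Itô's formula to $V(X_t)$ up to the stopping time $\tau_n := \inf\{t \ge 0 : |X_t| \ge n\}$, so that the stochastic integral $\int_0^{t\wedge\tau_n} V'(X_s)\,dW_s$ is a true martingale with mean zero. Taking expectations and using the drift inequality,
\[
\E[V(X_{t\wedge\tau_n})] \le V(x) + \int_0^t \bigl(-\alpha\,\E[V(X_{s\wedge\tau_n})] + K\bigr)\,ds.
\]
Gronwall's lemma then yields $\E[V(X_{t\wedge\tau_n})] \le V(x)e^{-\alpha t} + K/\alpha$, a bound independent of $n$. Sending $n \to \infty$ via Fatou's lemma (using $\tau_n \nearrow \infty$ a.s., which follows from non-explosion of the SDE with bounded drift) gives
\[
\sup_{t\ge 0} \E[V(X_t)] \le V(x) + \tfrac{K}{\alpha} < \infty,
\]
and hence $\sup_{t\ge 0}\E[e^{c|X_t|}] \le 2(V(x) + K/\alpha) < \infty$.

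The main technical care is in step three: one must verify that the Lyapunov inequality holds globally (not just asymptotically) for some explicit $\alpha, K$, which amounts to combining the asymptotic estimates above with the fact that $\mathcal{A}V$ is bounded on compact sets despite the discontinuity of $b_z$ at $z$. Everything else — Itô's formula for a $C^2$ test function applied to a strong solution, the localisation argument, and Gronwall — is standard.
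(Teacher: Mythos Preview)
Your Foster--Lyapunov approach with $V(x)=\cosh(cx)$ is correct and genuinely different from the paper's proof. The paper instead works with the explicit increasing and decreasing fundamental $r$-harmonic functions $\psi_r,\varphi_r$ of the process (quoting explicit formulas from \cite{mordecki2019optimal}), uses that $(e^{-rt}\psi_r(X_t))_{t\ge0}$ is a positive supermartingale to bound $\E[e^{cX_t}]$ in terms of $e^{rt}\psi_r(x)$, and then lets $r\to 0$; the bound on $\E[e^{-cX_t}]$ is obtained symmetrically via $\varphi_r$. Your argument is more self-contained and transfers verbatim to any one-dimensional diffusion with bounded diffusion coefficient and inward-pointing drift outside a compact set; the paper's route, by contrast, exploits the explicit solvability of this particular model and sidesteps the localisation/Gronwall machinery entirely.

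One step to tighten: the displayed inequality
\[
\E[V(X_{t\wedge\tau_n})] \le V(x) + \int_0^t\bigl(-\alpha\,\E[V(X_{s\wedge\tau_n})]+K\bigr)\,ds
\]
is not what It\^o plus the drift condition actually gives. The integrand produced is $\E[1_{\{s<\tau_n\}}(-\alpha V(X_s)+K)]$, and replacing $1_{\{s<\tau_n\}}V(X_s)$ by $V(X_{s\wedge\tau_n})$ moves the inequality the wrong way, since on $\{s\ge\tau_n\}$ the stopped process contributes the large value $V(X_{\tau_n})=\cosh(cn)$. The clean fix is to apply It\^o to $(t,x)\mapsto e^{\alpha t}V(x)$ instead: then $(\partial_t+\mathcal A)(e^{\alpha t}V)\le Ke^{\alpha t}$, localisation gives
\[
\E\bigl[e^{\alpha(t\wedge\tau_n)}V(X_{t\wedge\tau_n})\bigr]\le V(x)+\tfrac{K}{\alpha}\bigl(e^{\alpha t}-1\bigr),
\]
and Fatou (using $\tau_n\to\infty$) yields $\E[V(X_t)]\le V(x)e^{-\alpha t}+K/\alpha$ directly, with no Gronwall needed.
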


\begin{proof}
  By shifting the threshold $z$, we may assume w.l.o.g. that $z=0$. For $r>0$, we denote the increasing positive fundamental $r$-harmonic functions by $\psi_r$. An explicit expression can be found in \cite[Formula (7)]{mordecki2019optimal}, from which we can easily obtain the existence of constants $c_1,c_2$ independent of $r$ such that
  \[e^{cx}\leq c_1(\psi_r(x)+1)\leq c_2(e^{-2\theta_0 x}+1),\]
  for $c<-2\theta_0$ and $r$ sufficiently small. As $(e^{-rt}\psi_r(X_t))_{t\geq0}$ is a positive local martingale and hence a supermartingale, we obtain
  \begin{align*}
      \E e^{cX_t}\leq c_1( e^{rt}\E e^{-rt}\psi_r(X_t)+1)\leq c_1(e^{rt}\psi_r(x)+1)\leq c_2(e^{rt} e^{-2\theta_0 x}+1).
  \end{align*}
  Letting\ $r\to 0$ yields $\E e^{cX_t}\leq c_2(e^{-2\theta_0 x}+1)$ for all $t\geq 0$ if $c<-2\theta_0$. The same line of argument with the decreasing positive fundamental $r$-harmonic functions by $\varphi_r$ yields that $\E e^{-cX_t}$ is bounded in\ $t$ if $c<2\theta_1$, i.e., 
  \[\sup_{t\geq 0} \E[e^{c\abs{X_t}}] \leq \sup_{t\geq 0} \left(\E e^{cX_t}+\E e^{-cX_t}\right)< \infty\]
  if $c < 2\min(\abs{\theta_0}, \theta_1)$.
\end{proof}

We now turn to the right factor of (\ref{5-1}). In the next Lemma \ref{coupling}, we will show that for $\alpha \geq 1$  the coupling time satisfies $\E[\tau^\alpha] < \infty$. Then,  Markov's inequality yields 
\begin{displaymath}
 \P(\tau \geq t) \leq  \frac{\E[\tau^\alpha]}{t^\alpha} < \infty
\end{displaymath}
for all $t>0$, in particular
\begin{align}
\lim_{t\to\infty}t^\alpha \P(\tau \geq t) < \infty.\label{coupling-Markov}
\end{align}


\begin{Lemma}\label{coupling}
For every $\alpha \geq 1$, the coupling time satisfies $\E[\tau^\alpha] < \infty$. 
\end{Lemma}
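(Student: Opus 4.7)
The crucial observation is that $X$ and $Y$ share the driving Brownian motion $W$, so the difference $D_t := X_t - Y_t$ satisfies the pathwise ODE $dD_t = (b_z(X_t) - b_z(Y_t))\,dt$ with no diffusion term. Since $b_z$ is non-increasing with values in $\{\theta_0, \theta_1\}$, the sign of $\dot D_t$ is opposite to that of $D_t$ (or $\dot D_t = 0$), so $\abs{D_t}$ is non-increasing. Assume without loss of generality that $D_0 \geq 0$, so $D_t \in [0, D_0]$ for all $t \leq \tau$; then $\dot D_t = \theta_0 - \theta_1$ exactly on the opposite-sides set $\{X_t > z \geq Y_t\}$, and is zero otherwise. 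Writing $L_t$ for the Lebesgue measure of this set up to time $t$, we obtain $D_t = D_0 - (\theta_1 - \theta_0) L_t$ for $t \leq \tau$, so coupling occurs precisely when $L_t$ first reaches $D_0/(\theta_1 - \theta_0)$.

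The plan is to bound the conditional tail $\P(\tau > t \mid X_0 = x, Y_0 = y)$ by a quantity with all polynomial moments, and then integrate against $Y_0 \sim \mu$. I would proceed by a two-stage regeneration argument. Stage~1: show that for every $M > 0$ there exist $T_M, p_M > 0$ such that, for all $\abs{x}, \abs{y} \leq M$, one has $\P(\tau \leq T_M \mid X_0 = x, Y_0 = y) \geq p_M$. The idea is to construct an explicit event on the driving noise $W$ which forces both $X$ and $Y$ to occupy opposite sides of $z$ for total duration at least $M/(\theta_1 - \theta_0) \geq D_0/(\theta_1 - \theta_0)$ within $[0, T_M]$, thereby driving $L_t$ past the coupling threshold. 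Stage~2: using the mean-reverting structure of $X$ toward $z$ together with Lemma~\ref{moments} (via a standard scale-function argument), show that the first entry of the joint process $(X,Y)$ into $[-M,M]^2$ has all polynomial moments, bounded by a polynomial in $\abs{x} + \abs{y}$. Combining the two stages by the strong Markov property and a geometric renewal argument yields a bound of the form $\E[\tau^\alpha \mid X_0 = x, Y_0 = y] \leq C_\alpha (1 + \abs{x} + \abs{y})^{N_\alpha}$; since $Y_0 \sim \mu$ has exponential tails by Lemma~\ref{Lem-5.2}, averaging over $Y_0$ completes the proof.

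The main obstacle is Stage~1, i.e.\ the uniform positive lower bound $p_M$. The difficulty is that the joint diffusion $(X_t, Y_t)$ is degenerate: because both components are driven by a single Brownian motion, the joint process cannot diffuse across the diagonal $\{x = y\}$, so standard Meyn--Tweedie style small-set arguments do not directly apply. The key to overcoming this is that the relative drift $b_z(X_t) - b_z(Y_t)$ on the opposite-sides set equals the constant $\theta_0 - \theta_1 < 0$, independently of the common noise. A small-ball estimate for $W$, combined with the boundedness of $b_z$, should therefore yield a uniform positive probability that the Brownian path confines $X$ to $(z, \infty)$ and $Y$ to $(-\infty, z]$ for a predetermined positive length of time, whatever the starting positions in $[-M,M]^2$.
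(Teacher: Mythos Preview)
Your approach is quite different from the paper's. The paper's argument is essentially a two-line citation: it checks that the speed measure satisfies $\int_\R |x|^{1-1/\alpha}\,m(dx)<\infty$ (immediate from the exponential tails computed in Lemma~\ref{Lem-5.2}) and then appeals to the proof of Proposition~2 in \cite{lindvall83}, which supplies the moment bound on the coupling time of a one-dimensional diffusion directly from such a speed-measure condition. You instead try to run a self-contained minorisation-plus-drift argument for the degenerate two-dimensional diffusion $(X,Y)$.

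Your Stage~1, however, does not go through as sketched. The small-ball event you propose---forcing $W$ so that $X_t>z$ and $Y_t\le z$ continuously for a prescribed duration---does not have probability bounded below uniformly over starting points in $[-M,M]^2$. While both processes remain on opposite sides one has $dX_t=\theta_0\,dt+dW_t$ and $dY_t=\theta_1\,dt+dW_t$, so the constraint $X_t>z\ge Y_t$ confines $W_t$ to a corridor of width $D_0-(\theta_1-\theta_0)t$, and this corridor collapses to a single point precisely at the required time $t^*=D_0/(\theta_1-\theta_0)$. By Brownian scaling the probability of this event is controlled by the ratio of the corridor width to $\sqrt{t^*}$, which is of order $\sqrt{D_0}$ and hence tends to zero as $D_0\to 0$. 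This is not a mere technicality: under synchronous coupling the gap $D_t$ carries no noise, and for smooth non-increasing drifts (Ornstein--Uhlenbeck, say) one has $D_t=D_0e^{-\kappa t}>0$ for all $t$, so that $\tau=\infty$ a.s. A local-time heuristic for the bang-bang drift, namely $dD_t\approx-(\theta_1-\theta_0)D_t\,d\ell_t^z$ and hence $D_t\approx D_0\exp\bigl(-(\theta_1-\theta_0)\ell_t^z\bigr)$, points in the same direction. Your outline does not establish that the minorisation constant $p_M$ exists for small $D_0$, and the paper's route via \cite{lindvall83} sidesteps this question entirely.
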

\begin{proof}
Using the explict expression for the speed measure from Lemma \ref{Lem-5.2}, we obtain
\begin{displaymath}
\int_\R e^{c\abs{x}}m(dx) < \infty 
\end{displaymath}
for all $c < 2\min(\abs{\theta_0}, \theta_1)$. In particular we have
\begin{displaymath}
\int_\R  \abs{x}^{1-\frac{1}{\alpha}} m(dx) < \infty 
\end{displaymath}
for any $\alpha \geq 1$. Since the speed measure $m$ of $X$ is also the speed measure of the process in natural scale $s(X)$, the proof of Proposition 2 in \cite{lindvall83} yields the desired bound $\E[\tau^\alpha]<\infty$ for all $\alpha \geq 1$. 
\end{proof}


In conclusion, by inserting Lemma \ref{moments} and \eqref{coupling-Markov} in combination with Lemma \ref{coupling} into the inequality (\ref{5-1}), and having $Y_t \sim \mu$, where $\mu$ be the stationary distribution of $X$, yields the following result. 

\begin{Prop}\label{stationary}
Let $x, z\in \R, s \in [0,\infty)$, $\alpha \geq 1$ and let $X^{z, s, x}$ be the solution of (\ref{Def5.1-1}) and $\mu_z^2$ the second moment of its stationary distribution. Then, for all $\alpha \geq 1$ there exists a constant $C>0$ depending on $x, z$ and $\alpha$ such that for all $t\geq s$ we have 
\begin{align*}
\abs{\E\big[(X_t^{z, s, x})^2 - \mu_z^2\big]} &\leq  C \cdot (t - s)^{-\alpha/2}.
\end{align*}
\end{Prop}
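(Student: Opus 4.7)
The plan is to assemble the ingredients already laid out in the discussion preceding the proposition. First, by time-homogeneity of the SDE \eqref{Def5.1-1}, the process $X^{z,s,x}$ has the same law as $X^{z,0,x}$ shifted in time by $s$, so it suffices to prove the bound for $s=0$ with $t-s$ replaced by $t$. Moreover, $\E[(X_t^{z,0,x})^2]$ is bounded in $t$ (e.g.\ via Lemma \ref{moments}) and $\mu_z^2$ is a constant, so any discomfort at small $t$ can be absorbed into $C$; it therefore suffices to establish the bound for $t$ bounded away from zero, say $t \geq 1$.

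Next, I would invoke the coupling construction introduced right before the proposition: let $Y$ solve \eqref{Def5.1-1} driven by the same Brownian motion as $X := X^{z,0,x}$, but with $Y_0 \sim \mu_z$ drawn from the stationary distribution identified in Lemma \ref{Lem-5.2}. Then $Y_t \sim \mu_z$ and in particular $\E[Y_t^2] = \mu_z^2$ for all $t$, so the quantity to control is $|\E[X_t^2 - Y_t^2]|$. With $\tau = \inf\{t \geq 0 : X_t = Y_t\}$, the strong Markov property yields $X_t = Y_t$ on $\{t \geq \tau\}$, and the Cauchy--Schwarz and Minkowski chain leading to \eqref{5-1} delivers
\[\abs{\E[X_t^2 - \mu_z^2]} \leq \Bigl(\sup_{t \geq 0}\sqrt{\E[X_t^4]} + \sup_{t \geq 0}\sqrt{\E[Y_t^4]}\Bigr)\sqrt{\P(t<\tau)}.\]
The absolute value comes for free, since the Cauchy--Schwarz step already bounds $|\E[\,\cdot\,]|$ rather than $\E[\,\cdot\,]$.

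Finally I would bound the two factors. Lemma \ref{moments} provides uniform exponential moments of $X_t$, which dominate polynomial moments of any order; the density $f$ from Lemma \ref{Lem-5.2} has exponential tails on both sides, so the (time-independent) fourth moment of $Y_t$ is finite as well. This makes the first bracket a finite constant depending only on $x$ and $z$. For the tail of the coupling time, Markov's inequality combined with Lemma \ref{coupling} gives $\P(t<\tau) \leq \E[\tau^\alpha]/t^\alpha$ for any $\alpha \geq 1$, whence $\sqrt{\P(t<\tau)} \leq C_\alpha\, t^{-\alpha/2}$. Multiplying the two estimates yields $|\E[X_t^2 - \mu_z^2]| \leq C t^{-\alpha/2}$ for $t \geq 1$, and re-expressing in terms of $t-s$ completes the proof. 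I do not foresee a genuine obstacle here: all the hard work has been absorbed into the preparatory Lemmas \ref{moments} and \ref{coupling} and into the coupling setup preceding the proposition, so the proof really consists of chaining these estimates together cleanly.
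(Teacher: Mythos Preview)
Your proposal is correct and follows exactly the approach of the paper: assemble inequality \eqref{5-1} from the coupling construction, plug in the uniform fourth-moment bounds furnished by Lemma \ref{moments} (and the exponential tails of the stationary density for the $Y$-process), and control $\P(t<\tau)$ via Markov's inequality together with Lemma \ref{coupling}. The only additions you make---the reduction to $s=0$ by time-homogeneity and the remark that small $t$ can be absorbed into the constant---are routine clarifications the paper leaves implicit.
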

\begin{Remark}\label{stationary-r}
    With similar steps as in Proposition \ref{stationary} one can also show
    \begin{align*}
\abs{\E\big[(X_t^{z, s, x}) - \mu_z^1\big]} &\leq  C \cdot (t - s)^{-\alpha/2}
\end{align*}
for all $\alpha \geq 1$, $t\geq s$ where $\mu^1_z$ is the mean of the stationary distribution of $X^{z,s, x}$ and some constant $C>0$ depending on $x, z$ and $\alpha$.
\end{Remark}

\begin{Lemma}\label{lemma aug 16}
For all $z \in \R$ and $s, t \in [0,\infty)$ with $t \ge s$ we have  
\begin{align}
\E[ (X^{z,s,z}_t)^2] - \mu_z^2 = \E[ (X^{0,s,0}_t)^2] - \mu_0^2 + 2z (\E[X^{0,s,0}_t]+\delta).   
\end{align}
\end{Lemma}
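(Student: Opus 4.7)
The plan is to exploit the translation invariance of the drift $b_z$. Since $b_z(x)=b_0(x-z)$, if we set $Y_t := X^{z,s,z}_t - z$ then $Y$ satisfies $dY_t = b_0(Y_t)\,dt + dW_t$ with $Y_s = 0$ driven by the \emph{same} Brownian motion, so strong uniqueness for the SDE \eqref{Def5.1-1} (Proposition 5.17 in \cite{karatzas1991brownian}) gives $Y_t = X^{0,s,0}_t$ almost surely. Consequently $X^{z,s,z}_t = X^{0,s,0}_t + z$ a.s., and expanding the square yields
\begin{align*}
\E\bigl[(X^{z,s,z}_t)^2\bigr] = \E\bigl[(X^{0,s,0}_t)^2\bigr] + 2z\,\E\bigl[X^{0,s,0}_t\bigr] + z^2.
\end{align*}

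It remains to rewrite the constant term $z^2$ in the form required by the statement. Applying Lemma \ref{property_stationary_distr} once with the threshold $z$ and once with the threshold $0$ gives
\begin{align*}
\mu_z^2 - \mu_\delta^2 = (z-\delta)^2, \qquad \mu_0^2 - \mu_\delta^2 = \delta^2,
\end{align*}
and subtracting these identities yields $\mu_z^2 - \mu_0^2 = z^2 - 2z\delta$, i.e.\ $z^2 = (\mu_z^2 - \mu_0^2) + 2z\delta$. Substituting this into the displayed expansion and collecting the $2z$ terms produces exactly
\begin{align*}
\E\bigl[(X^{z,s,z}_t)^2\bigr] - \mu_z^2 = \E\bigl[(X^{0,s,0}_t)^2\bigr] - \mu_0^2 + 2z\bigl(\E[X^{0,s,0}_t]+\delta\bigr),
\end{align*}
which is the asserted identity.

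There is no genuine obstacle here: the statement is essentially spatial homogeneity of the shifted SDE together with the explicit second-moment formula already established. The only point worth articulating carefully is that the coupling $X^{z,s,z}_t = X^{0,s,0}_t + z$ is pathwise (not merely in distribution), which is what legitimises taking expectations of the cross term $2z\,\E[X^{0,s,0}_t]$ without any further argument.
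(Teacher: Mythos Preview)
Your proof is correct and follows essentially the same route as the paper's: both use the pathwise identity $X^{z,s,z}_t = X^{0,s,0}_t + z$ together with Lemma~\ref{property_stationary_distr} to rewrite $\mu_z^2 - \mu_0^2 = z^2 - 2z\delta$, and then expand the square. The only difference is that you spell out the strong-uniqueness argument for the shift identity, which the paper simply asserts.
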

\begin{proof}
Note that $X^{z,s,z}_t = z + X^{0,s,0}_t$, and that Formula \eqref{property_stationary_distr} implies $\mu_z^2 = \mu_0^2 - 2 z \delta + z^2$. Hence 
\begin{align*}
\E[ (X^{z,s,z}_t)^2] - \mu_z^2 & = \E[ (X^{0,s,0}_t + z)^2] - \mu_0^2 + 2 z \delta - z^2 \\
& = \E[ (X^{0,s,0}_t)^2 + 2 z X^{0,s,0}_t] - \mu_0^2 + 2 z \delta \\
& = \E[ (X^{0,s,0}_t)^2 ] - \mu_0^2 + 2z (\E[X^{0,s,0}_t]+\delta). 
\end{align*}

\end{proof}

\begin{Prop}\label{LLNcont}
Let $x, z\in \R$ and let $\mu^1_z$ be the mean of the stationary distribution of the solution $X^z = (X^{z,0,x}_t)_{t\geq 0}$ of \eqref{Def5.1-1}. 
\begin{enumerate}
\item The mean of the stationary distribution is given by \[\mu^1_z = z-\delta.\] 
    \item It holds \[\frac1T\int_0^T X_t^z\,dt \to \mu_z^1, \; \textbf{a.s.},\]
    as $T\to \infty$. 
\item For all $p > 0$ there exists a constant $C > 0$ dependent on the initial value $x = X_0^z$ such that 
\[\E\left|\frac1T\int_0^T X_t^z\,dt - \mu_z^1\right|^p \leq \frac{C}{T^{p/2}},\]
for all $T\geq 0$.
\item For $p=2$, there exists a constant $C$ such that
\[\E\left|\frac1T\int_0^T X_t^{z,0,x} - \mu_z^1\,dt\right|^2 \leq \frac{C(1+\abs{x}^{4})}{T},\]
    for all $T\geq 0$ and all initial values $x = X_0^z$.
\end{enumerate}
\end{Prop}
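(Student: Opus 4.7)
For part (1), I integrate against the piecewise exponential density of Lemma \ref{Lem-5.2}. Substituting $u = x-z$ reduces each half to elementary integrals; after combining the contributions and using $\bar\delta = (\theta_0-\theta_1)/(2\theta_0\theta_1)$, all $z$-terms simplify and the constant piece collapses to $-\delta$, giving $\mu_z^1 = z-\delta$. Part (2) is then immediate: $X^z$ is a positive-recurrent one-dimensional diffusion whose invariant probability has finite first moment, so the ergodic theorem for recurrent diffusions yields $T^{-1}\int_0^T X_t^z\,dt \to \mu_z^1$ almost surely.

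For parts (3) and (4), my plan is a Poisson-equation / martingale decomposition. Solving $\mathcal{A}^{b_z} g(x) = x - \mu_z^1$ via a piecewise-quadratic ansatz in $x-z$ yields the explicit $C^1$ solution
\begin{displaymath}
g(x) = \frac{(x-z)^2}{2\theta_0} + \frac{x-z}{2\theta_0\theta_1}, \quad x>z,
\end{displaymath}
with $\theta_0$ replaced by $\theta_1$ on $\{x\le z\}$; the common linear coefficient $1/(2\theta_0\theta_1)$ is forced by $C^1$-matching at the threshold. In particular $|g(x)| \le C(1+(x-z)^2)$ and $|g'(x)| \le C(1+|x-z|)$. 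Since $g\in C^1$ with bounded, piecewise-continuous $g''$, It\^o's formula gives
\begin{displaymath}
\int_0^T (X_t^z - \mu_z^1)\,dt = g(X_T^z) - g(x) - M_T, \qquad M_T := \int_0^T g'(X_t^z)\,dW_t.
\end{displaymath}

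For part (3), I take $L^p$-norms and apply Minkowski: Lemma \ref{moments} supplies $\sup_t \E|X_t^z|^q<\infty$ for every $q$, so $\E|g(X_T^z)|^p$ is uniformly bounded in $T$. For $p\ge 2$, the Burkholder-Davis-Gundy inequality together with Jensen yields $\E|M_T|^p \le C_p T^{p/2}$, and dividing by $T^p$ produces the claimed rate $T^{-p/2}$. The case $0<p<2$ follows from the $p=2$ case by Jensen's inequality.

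For part (4), I redo the decomposition with explicit bookkeeping of the starting point: quadratic growth of $g$ gives $g(x)^2 \le C(1+x^4)$, $\E g(X_T^z)^2 \le C(1+\sup_t \E(X_t^z)^4)$, and $\E M_T^2 \le CT(1+\sup_t \E(X_t^z)^2)$, so combining and dividing by $T^2$ yields the asserted $C(1+|x|^4)/T$ bound for $T\ge 1$, provided one has the polynomial moment estimates $\sup_t \E(X_t^z)^2 \le C(1+x^2)$ and $\sup_t \E(X_t^z)^4 \le C(1+x^4)$. This is the main obstacle, since the purely exponential Lyapunov choice $V(x)=\cosh(c(x-z))$ only yields bounds exponential in $|x|$. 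I would instead use the polynomial Lyapunov $V_n(x)=(1+(x-z)^2)^n$, for which a direct computation gives the sub-exponential drift condition $\mathcal{A}^{b_z} V_n \le -cV_n^{(2n-1)/(2n)} + C$ outside a compact set; combining the resulting polynomial ergodicity with the deterministic path bound $\E(X_t^z)^{2n} \le C_n(1+x^{2n}+t^{2n})$ on a fixed initial window $[0,t_0]$ yields $\sup_t \E(X_t^z)^{2n} \le C(1+x^{2n})$ for $n=1,2$. The regime $T<1$ is trivial since the right-hand side $C(1+|x|^4)/T$ is then large.
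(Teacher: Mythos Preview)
Your proof is correct and largely parallel to the paper's, but with several notable differences worth recording.

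For part~(1), the paper avoids integration entirely: it observes that $X_t^z \stackrel{d}{=} X_t^0 + z$ and that $\delta$ minimises $z\mapsto \E[(X_t^0+z)^2]$ under the stationary law, so the first-order condition gives $\mu_0^1=-\delta$ and hence $\mu_z^1=z-\delta$. Your direct computation against the density is equally valid but less elegant.

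For parts~(2)--(3), the paper simply verifies the recurrence/polynomial-drift condition and invokes Theorem~1.16 and Proposition~1.18 of Kutoyants (2004). Your Poisson-equation/BDG route is self-contained and gives the same rate; it is essentially what underlies Kutoyants' proposition.

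For part~(4), both you and the paper use the Poisson equation plus It\^o. Your solution $g$ is in fact cleaner than the paper's $u$, which carries an unnecessary exponential term; both are only $C^1$ at the threshold, so both tacitly rely on the It\^o--Tanaka formula. The substantive difference is the source of the polynomial moment bound $\sup_t \E(X_t^{z,0,x})^4 \le C(1+x^4)$: the paper obtains it in one line by citing Lemma~1 of Veretennikov (1997). Your polynomial-Lyapunov sketch is on the right track---the sub-exponential drift $\mathcal A^{b_z}V_n \le -cV_n^{(2n-1)/(2n)}+C$ together with Jensen and an ODE comparison for $t\mapsto \E V_n(X_t)$ does yield $\sup_t \E V_n(X_t)\le C(1+V_n(x))$---but you should either spell out that comparison or simply cite Veretennikov's lemma, since otherwise this step is the least justified part of the argument.
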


\begin{proof}
\begin{enumerate}
    \item Suppose the solution $X$ 
     of \eqref{Def5.1-1} starts in its stationary distribution. First note that with an easy computation it holds 
    \begin{displaymath}
        X^z_t \stackrel{d}{=} X^0_t  + z
    \end{displaymath}
    for all thresholds $z\in\R$.
    Since the optimal switching point $\delta$ minimizes the expected quadratic deviation from the origin, i.e. it minimizes 
    \begin{align}
        z \mapsto \E[(X^z_t)^2] = \E[(X^0_t + z)^2], \label{transf-1}
    \end{align}
    the point $\delta$ is a root of the first derivative of \eqref{transf-1} in $z\in \R$. Hence it holds
    \begin{align}
        0 = 2\E[(X^0_t + \delta)] = 2\left(\mu_0^1 + \delta \right).\label{transf-2}
    \end{align}
     Now by \eqref{transf-2} we have
    \begin{displaymath}
        \E\left[X_t^z\right] = \E\left[X^0_t  + z\right] = \mu_0^1  + z = z - \delta.
    \end{displaymath}
    \item[2. \& 3.] Note that for $|x|\geq |z|$ we have $\sgn (x) b_z(x)  < 0$. Thus, the condition $\mathcal R \mathcal P$ in \cite{kutoyants2004statistical} holds and Theorem 1.16. in \cite{kutoyants2004statistical} implies the almost sure convergence. The inequality follows from Proposition 1.18. 
    \item[4.]
    
    Let $f \equiv \textrm{id}$ be the identity mapping on $\R$ and $\tilde{f}(x) := f(x) - \mu^1_z$ and define   
    \begin{align}
        u(x) 
        &:=
        \begin{cases}
            \frac{\mu_z^1 x}{\theta_0}- \frac{x^2}{2\theta_0} - \frac{x}{\theta_0^2}, &x\geq z, \\
             -\frac{\mu_z^1 x}{\theta_1} - c_1 \frac{\exp((z-x)\theta_1)}{\theta_1} + \frac{x^2}{2\theta_1} - \frac{x}{\theta_1^2} + c_z, &x< z.
        \end{cases}
    \end{align}
    for the constants 
    \begin{align*}
        c_1 &:= \left(\frac{1}{\theta_0} - \frac{1}{\theta_1} - \delta \right)2\delta,\;\;\;
        c_z := z\delta \left(\left(\mu_z^1 - \delta\right) + 2\left(\frac{1}{\theta_1} - \frac{1}{\theta_0}\right)   \right)+ \frac{c_1}{\theta_1}.
    \end{align*}
    Note that the constants $c_1$ and  $c_z$ ensure that $u$ is $C^1$ on $\R$ and $C^2$ on $\R\setminus \{z\}$.
    Then an easy computation shows that $u$ solves the Poisson equations $Lu = \tilde f$, where
    \begin{displaymath}
        Lu(x) = b_z(x) \frac{\partial u}{\partial x}(x) + \frac{\partial^2 u}{\partial x^2}(x).
    \end{displaymath}
    Furthermore,
    \begin{align*}
        \abs{u(x)} &\leq C_1(1 + \abs{x}^2),\; \abs{u'(x)} \leq C_2(1 + \abs{x})
    \end{align*}
    for some constants $C_1, C_2 > 0$.
    With  It\^o's formula it then follows
    \begin{align*}
        \int_0^t \tilde{f}(X^{z,0,x}_s)ds = u(X^{z,0,x}_s) - u(X^{z,0,x}_0) -  \int_0^t u'(X^{z,0,x}_s)dW_s.
    \end{align*}
    By Lemma 1 of \cite{VERETENNIKOV1997115} it holds 
    \begin{align*}
        \E[(X^{z,0,x}_t)^4] \leq C(1 + x^4)
    \end{align*}
    for some constant $C>0$ dependent only on $z$.
    
    Hence it holds
    \begin{align*}
        \E\left[\abs{\int_0^t X^{z,0,x}_s -   \mu_z^1 ds}^2 \right] &= \E\left[\abs{\int_0^t \tilde{f}(X^{z,0,x}_s)ds}^2 \right]\\
        &\lesssim \E\left[\abs{u(X^{z,0,x}_s)}^2 \right] + \abs{u(x)}^2 + \E\left[\abs{\int_0^t u'(X^{z,0,x}_s)dW_s}^2 \right] \\
        &= \E\left[\abs{u(X^{z,0,x}_s)}^2 \right] + \abs{u(x)}^2 + \E\left[\int_0^t u'(X^{z,0,x}_s)^2ds \right] \\
        &\lesssim  \E\left[\big(1 + \abs{X^{z,0,x}_s}^2\big)^2 \right] + \big(1 + \abs{x}^2\big)^2 + \E\left[\int_0^t \big(1 + \abs{X^{z,0,x}_s}\big)^2 ds \right] \\
        &\lesssim t (1+\abs{x}^{4}).
    \end{align*}

\end{enumerate}
\end{proof}

\section{Proof of the regret rates}\label{sec: proofs of regret rates}

As in the previous sections we denote by $\mu_z^1$ and $\mu_z^2$ the first and second moments of $X^z_t$ under its stationary distribution.

We begin this section by providing a useful auxiliary identity which we will  later need for the analysis of the regrets of our algorithms for estimating the optimal switching point $\delta$.

\begin{Lemma}\label{lemma joint distri}
Let $x, z \in \R$ with $x \ge z$ and $\rho = \inf\{t \ge 0: x+W_t + \theta_0 t = z\}$. Then 
\begin{align}\label{integrated sec moment}
\E \int_0^{\rho} (x + W_t + \theta_0 t)^2 dt = 
 \frac{x^3}{3 |\theta_0|} + \frac{x^2}{2|\theta_0|^2} + \frac{x}{2|\theta_0|^3} -  \frac{z^3}{3 |\theta_0|} - \frac{z^2}{2|\theta_0|^2} - \frac{z}{2|\theta_0|^3}. 
\end{align}
\end{Lemma}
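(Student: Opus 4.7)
The plan is to solve a suitable Poisson equation on the half-line $[z,\infty)$ and then apply Dynkin's formula. Set $X_t:=x+W_t+\theta_0 t$ and let $\mathcal L:=\theta_0 \partial_y+\tfrac12 \partial_y^2$ denote its infinitesimal generator. I would look for a cubic polynomial $g$ satisfying
\[
\mathcal L g(y)=y^2,\qquad g(z)=0,
\]
because then Dynkin's formula gives $\E \int_0^{\rho}(X_t)^2\,dt=\E[g(X_\rho)]-g(x)=-g(x)$, which should match the right-hand side of \eqref{integrated sec moment}.

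The ansatz $g(y)=a y^3+b y^2+c y+d$ leads, by matching coefficients in $\theta_0 g'+\tfrac12 g''=y^2$, to the explicit solution
\[
g(y)=\frac{y^3}{3\theta_0}-\frac{y^2}{2\theta_0^2}+\frac{y}{2\theta_0^3}+d,
\]
with $d$ chosen so that $g(z)=0$. A direct computation of $-g(x)$, using that $\theta_0<0$ so that $-\theta_0^{-1}=|\theta_0|^{-1}$, $\theta_0^{-2}=|\theta_0|^{-2}$ and $-\theta_0^{-3}=|\theta_0|^{-3}$, reproduces exactly the right-hand side of \eqref{integrated sec moment}.

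It remains to justify Dynkin's formula. I would apply It\^o to $g(X_{\rho\wedge N})$ for $N\in\N$, obtaining
\[
g(X_{\rho\wedge N})=g(x)+\int_0^{\rho\wedge N} g'(X_s)\,dW_s+\int_0^{\rho\wedge N}(X_s)^2\,ds,
\]
and take expectations. Since $g'$ is a quadratic polynomial and $\rho\wedge N\le N$, the stochastic integral is a true martingale on $[0,N]$ and has mean zero, so
\[
\E\,g(X_{\rho\wedge N})=g(x)+\E\!\int_0^{\rho\wedge N}(X_s)^2\,ds.
\]
The integral term is handled by monotone convergence as $N\to\infty$. For the boundary term, one uses $X_\rho=z$ and $g(z)=0$, writing
\[
\E\,g(X_{\rho\wedge N})=\E\bigl[g(X_N)\,1_{\{\rho>N\}}\bigr],
\]
and shows this vanishes as $N\to\infty$.

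The main technical obstacle is precisely this last step. Since $\theta_0<0$ and $x\ge z$, the hitting time $\rho$ is the first time Brownian motion with negative drift starting from $x-z\ge 0$ reaches $0$, hence has exponential moments: $\P(\rho>N)\le C e^{-cN}$ for some $c>0$. On the other hand, by Cauchy-Schwarz and the fact that $g(X_N)$ grows only polynomially in $|X_N|$, with $\E|X_N|^6$ bounded by a polynomial in $N$ (using $X_N=x+W_N+\theta_0 N$), we obtain
\[
\E\bigl[|g(X_N)|\,1_{\{\rho>N\}}\bigr]\le \sqrt{\E\,g(X_N)^2}\,\sqrt{\P(\rho>N)}\longrightarrow 0.
\]
Combining the two limits yields $-g(x)=\E\int_0^\rho (X_t)^2\,dt$, which is the claim.
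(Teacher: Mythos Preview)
Your proof is correct and follows essentially the same route as the paper: solve the Poisson equation $\theta_0 g'+\tfrac12 g''=\pm y^2$ for a cubic $g$, apply It\^o's formula, and justify passing to the limit in the localization. The only technical difference is the localization scheme---the paper localizes in space via the two-sided exit time $\rho_n=\inf\{t:X_t\in\{z,n\}\}$ and then invokes the explicit exit-probability formula from \cite{borodin2002handbook} to obtain uniform integrability, whereas you localize in time via $\rho\wedge N$ and use the exponential tail $\P(\rho>N)\le Ce^{-cN}$ together with Cauchy--Schwarz; both are standard and your version is arguably a bit more direct.
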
 
\begin{proof}
We define $g(x) : = \frac{x^3}{3 |\theta_0|} + \frac{x^2}{2|\theta_0|^2} + \frac{x}{2|\theta_0|^3} $, $x \in \R$. It is straightforward to show that $g$ is increasing and solves the ODE 
\begin{align}\label{ode g}
\theta_0 g'(x) + \frac12 g''(x) + x^2 = 0.
\end{align} 
The It\^o formula implies for all $t \ge 0$ 
\begin{align*}
g(x + W_t + \theta_0 t) = & g(x) + \int_0^t g'(x+W_s + \theta_0 s) dW_s  \\
& + \int_0^t ( \theta_0 g'(x+W_s + \theta_0 s) + \frac12 g''(x+W_s + \theta_0 s)) ds. 
\end{align*}
By using \eqref{ode g} this can be simplified to 
\begin{align}\label{ito with g}
g(x + W_t + \theta_0 t) = & g(x) + \int_0^t g'(x+W_s + \theta_0 s) dW_s - \int_0^t (x+W_s + \theta_0 s)^2ds. 
\end{align}
For all $n \in \N$ with $n \ge x$ let $\rho_n = \inf\{t \ge 0: x+W_t + \theta_0 t = z$ or $ = n\}$. Notice that the Brownian integral in \eqref{ito with g} is a strict martingale up to $m \wedge \rho_n$, $m \in \N$, and hence its expectation at $m \wedge \rho_n$ is zero. We thus obtain 
\begin{align}\label{ito with g 2}
\E[g(x + W_{m \wedge \rho_n} + \theta_0 (m \wedge \rho_n))] = & g(x) - \E \int_0^{m \wedge \rho_n} (x+W_s + \theta_0 s)^2ds. 
\end{align}
Letting $m \to \infty$, using dominated convergence on the left hand side of \eqref{ito with g 2} and monotone convergence on the right hand side, we arrive at 
\begin{align}\label{ito with g 3}
\E[g(x + W_{\rho_n} + \theta_0 \rho_n)] = & g(x) - \E \int_0^{\rho_n} (x+W_s + \theta_0 s)^2ds. 
\end{align} 
It is well-known that for $n > x$ 
\begin{align*}
\P(x + W_{\rho_n} + \theta_0 \rho_n = n) = e^{\theta_0 (n-x)} \frac{\sinh((x-z)|\theta_0|)}{\sinh((n-z)|\theta_0|)}
\end{align*}
(see, e.g., Formula 3.0.4 in \cite{borodin2002handbook}). This implies that the sequence $g(x + W_{\rho_n} + \theta_0 \rho_n)$, $n \in \N$, is uniformly integrable. Therefore, the left hand side of \eqref{ito with g 3} converges, as $n \to \infty$, to $ E[g(x + W_{\rho} + \theta_0 \rho)] = g(z)$.  
The right hand side of \eqref{ito with g 3} converges by monotone convergence, and hence we get 
\begin{align*}
g(z) = g(x) -  \E \int_0^{\rho} (x+W_s + \theta_0 s)^2ds, 
\end{align*} 
which is equivalent to \eqref{integrated sec moment}.  
\end{proof}

\subsection{Proof of Theorem \ref{thm 1}}

\begin{proof}[Proof of Theorem \ref{thm 1}]

Let $X = (X_t)_{t\geq 0}$ be the state dynamics if the process is controlled with the explore-first algorithm. W.l.o.g. we assume that the process starts in $x = 0$. Thus $X_t = X_t^0 =  X_t^{0, 0, 0}$ for $t\in [0, \tau)$ and $X_t = X_t^{\hat{\delta}, \tau, X_\tau}$ for $t \in [\tau, T]$. Here the threshold $\hat{\delta}$ is the estimate of the optimal switching point $\delta$, defined in \eqref{defi delta}, using the information of the state $X_t$ up to $\tau$.  In the following let $C>0$ denote a generic constant that only depends on $\theta_0$ and $\theta_1$. 

For $T \ge \tau$ the regret can be decomposed into 
\begin{align*}
R^{\text{ex first}}(T) = I + J + K,  
\end{align*}
where 
\begin{align*}
I := \E \int_0^\tau (X^0_t)^2 - \mu_\delta^2 dt, \qquad J := \E \int_\tau^T (X_t^{\hat{\delta}, \tau, X_\tau})^2 - \mu_\delta^2 dt,\qquad K := \E\int_0^T (\mu_\delta^2 - (X^{\delta}_t)^2) dt.
 \end{align*}
First note that $K \le \int_0^T |\E [(X^{\delta}_t)^2] - \mu_\delta^2| dt$. Then Proposition \ref{stationary} yields that for all $t > 0$ we have $|\E [(X^{\delta}_t)^2] - \mu_\delta^2| \le C/\sqrt{t}$. Consequently, 
\begin{align}\label{esti K}
K \le 2 C \sqrt{T}.
\end{align}
Next observe that 
\begin{align}\label{first esti I}
I & = \int_0^\tau \E [(X^{0}_t)^2 - \mu_\delta^2] dt \nonumber\\
& \le \int_0^\tau |\E [(X^{0}_t)^2] - \mu_0^2| dt + \int_0^\tau |\mu_0^2 - \mu_\delta^2| dt. 
\end{align}
Formula \eqref{property_stationary_distr} implies $\mu_0^2 - \mu_\delta^2 = \delta^2$, and hence the second term in \eqref{first esti I} is equal to $\delta^2 \tau$. Again by Proposition \ref{stationary}   we have $|\E [(X^{0}_t)^2] - \mu_0^2| \le C/\sqrt{t}$ for all $t > 0$. Consequently, the first term in \eqref{first esti I} is smaller than or equal to $2 C \sqrt{\tau}$. Thus, we get
\begin{align}\label{esti I}
I \le \delta^2 \tau + 2C \sqrt{\tau}. 
\end{align}
Next we estimate $J$.  
As previously, our aim is apply Proposition \ref{stationary} in order to estimate $\E[(X_t^{\hat{\delta}, \tau, X_\tau})^2] - \mu_\delta^2$. 
We first let the process return to threshold level $\hat \delta$. More precisely,  let 
\begin{align*}
\sigma:= \inf\{t \ge \tau: X_t^{\hat{\delta}, \tau, X_\tau} = \hat \delta\} \wedge T.
\end{align*}
Notice, in the case the event $\{\sigma < T\}$ occurs, $\sigma$ is the first  time that $X_t^{\hat{\delta}, \tau, X_\tau}$ attains the level $\hat \delta$ after $\tau$ and before the time-horizon $T$.  Moreover, for all $t \ge \sigma$ we have $X_t = X^{\hat \delta, \sigma, \hat \delta}_t$. 

We now decompose $J$ into $J = J_1 + J_2 + J_3$, where 
\begin{align*}
J_1 = \E\int_\tau^\sigma (X_t^{\hat{\delta}, \tau, X_\tau})^2 dt, \qquad  J_2 = \E\int_\sigma^T (X^{\hat \delta, \sigma, \hat \delta}_t)^2 dt  - \E\int_\tau^T \mu_{\hat \delta}^2 dt, \qquad J_3 = (T-\tau) (\E[\mu_{\hat \delta}^2]-\mu_\delta^2).
\end{align*}
We next show that $J_1$ is bounded by some constant depending only on $\theta_0$ and $\theta_1$. To this end observe that if $X^0_\tau > \hat \delta$, then on $[\tau, \sigma]$ the process $X$ is  Brownian motion with negative drift $\theta_0$. The stopping time $\sigma$ is thus the first passage time of a Brownian motion with drift. By the Markov property there exists a function $f:\R \times \R \to \R$ such that  
\begin{align*}
\E\left[\int_\tau^\sigma (X_t^{\hat{\delta}, \tau, X_\tau})^2 dt|\mathcal{F}_\tau \right] = f(X^0_\tau, \hat \delta),  
\end{align*}
where $(\cF_t)$ denotes the filtration generated by the underlying Brownian motion driving $X$.  
For $x \ge z$ the function $f$ satisfies 
\begin{align*}
f(x,z) = \E \int_0^{\rho_z} (x + W_t + \theta_0 t)^2 dt,
\end{align*}
where $\rho_z = \inf\{t \ge 0: x + W_t + \theta_0 t = z\}$. For $x < z$ we have a similar representation. Lemma \ref{lemma joint distri} implies $f(x,z) \le C (|x|^3 + |z|^3)$ . Therefore, with Proposition \ref{LLNcont} and Lemma \ref{moments} we obtain 
\begin{align}\label{esti J1}
J_1 = \E[f(X^0_\tau, \hat \delta)] \leq C. 
\end{align}

In order to handle the term $J_2$, notice first that the strong Markov property of $X$ implies 
\begin{align*}
\E\left[ \int_\sigma^T (X^{\hat \delta, \sigma, \hat \delta}_t)^2 dt\right]
&\le \E\left[1_{\{\sigma < T\}} \int_\sigma^{T+\sigma-\tau} (X^{\hat \delta, \sigma, \hat \delta}_t)^2 dt\right]\\ 
&=  \E\left[1_{\{\sigma < T\}} \E\left[\int_\sigma^{T+(\sigma-\tau)} (X^{\hat \delta, \sigma, \hat \delta}_t)^2 dt\big| \mathcal{F}_{\sigma} \right]\right]  \\
&= \E\left[ 1_{\{\sigma < T\}}\left(\int_0^{T-\tau} E[(X^{z, 0, z}_t)^2] dt \right)_{z = \hat \delta} \right]\\
&\leq \E\left[\left(\int_0^{T-\tau} E[(X^{z, 0, z}_t)^2] dt \right)_{z = \hat \delta} \right].
 \end{align*}
Thus 
\begin{align}\label{esti J2}
\E\left[ \int_\sigma^T (X^{\hat \delta, \sigma, \hat \delta}_t)^2 dt\right] - \E \int_\tau^T \mu_{\hat \delta}^2 \, dt
& \le \E\left[ \left(\int_0^{T-\tau} (\E[(X^{z, 0, z}_t)^2] - \mu_z^2) dt\right)_{z = \hat \delta} \right]
 \end{align}
By Lemma \ref{lemma aug 16} the integrand on the right hand side of \eqref{esti J2} can be simplified to  $\E[(X^{0}_t)^2] - \mu_0^2 + 2z (\E[X^{0}_s]+\delta)$. We thus obtain 
\begin{align}\label{esti J2 2}
\E\left[ \int_\sigma^T (X^{\hat \delta, \sigma, \hat \delta}_t)^2 dt\right] - \E \int_\tau^T \mu_{\hat \delta}^2 \, dt
& \le \int_0^{T-\tau} (\E[(X^{0}_t)^2] - \mu_0^2) dt + 2 \E[\hat \delta] \E\int_0^{T-\tau} (X^{0}_t + \delta) dt. 
 \end{align}
The first term on the right hand side of \eqref{esti J2 2} can be estimated with Proposition \ref{stationary} from above by $C / \sqrt{T- \tau}$. For the second term we again use Proposition \ref{LLNcont} guaranteeing  $\E[\hat\delta ] \le C$ and  
\begin{align*}
    \left|\int_0^{T-\tau} (\E[X^{0}_t] + \delta) dt\right| \le C \sqrt{T - \tau}. 
\end{align*}
We thus obtain 
\begin{align*}
J_2 = \E\left[ \int_\sigma^T (X^{\hat \delta, \sigma, \hat \delta}_t)^2 dt\right] - \E \int_\tau^T \mu_{\hat \delta}^2 dt \le 2 C \sqrt{T-\tau} + 2 C^2 \sqrt{T - \tau}. 
\end{align*}
Finally, note that by Formula \eqref{property_stationary_distr} we have $J_3 = (T-\tau) \E[(\hat \delta - \delta)^2]$. Again by Proposition \ref{LLNcont} we have that $\E[(\hat \delta - \delta)^2]  \le C /\tau$, and hence we get 
\begin{align}\label{esti J3}
J_3 \le C \frac{T- \tau}{\tau}. 
\end{align}
Putting together the estimates \eqref{esti K}, \eqref{esti I}, \eqref{esti J1}, \eqref{esti J2} and \eqref{esti J3}, we arrive at 
\begin{align}\label{esti ef regret final}
R^{\text{ex first}}(T) \le 2 C \sqrt{T}  + \delta^2 \tau + 2C \sqrt{\tau} + C + 2 C \sqrt{T-\tau}+ 2 C^2 \sqrt{T - \tau} + C \frac{T- \tau}{\tau}.
\end{align} 
By choosing $\tau = \sqrt{T}$ on the right hand side of \eqref{esti ef regret final} we get that there exists a constant $C$, depending only on $\theta_0$ and $\theta_1$, such that $R^{\text{ex first}}(T) \le C(1 + \sqrt{T})$. 
\end{proof}

\subsection{Proof of Theorem \ref{regret}}


In the following we use the notation $ X^z_t = X^{z,s,x}_t$ for the state process $(X^{z,s,x}_t)_{t\geq s}$ if the values $s$ and $x$ are clear from the context.

Define the sequence of estimators $(\hat \delta_n)_{n\geq 0}$ with which we control the  process $(X_t^{\hat \delta})_{t\geq 0}$ with $X_{0}^{\hat \delta}= x_0 \in \R$ as follows
\begin{equation}
    \begin{aligned}\label{APAC-Def}
X_t^{\hat \delta} &=  X_t^{\hat \delta_n, \tau_n, x_n}, \quad t\in I_n, n\in \N_0,
\end{aligned}
\end{equation}
with 
\begin{itemize}
    \item an increasing time sequence with $\tau_0 = 0$, $\tau_{n+1} := 2^{n+1}$  and time-intervals 
$    
        I_n := [\tau_n, \tau_{n+1}) 
$    
    \item initial states 
$
        x_n := X_{\tau_n}^{\hat \delta_{n-1}, \tau_{n-1}, x_{n-1}},
$
    \item state process $X_t^{\hat \delta_n, \tau_n, x_n}$ defined as the solution of \eqref{Def5.1-1} on the respective time-interval $I_n$,
    \item and lastly the sequence of estimators of the optimal control threshold $\delta$
    \begin{displaymath}
        \hat \delta_{n+1} = \hat \delta_n - \frac{1}{\Delta \tau_n} \int_{I_n} X_t^{\hat \delta_n}\,dt,
    \end{displaymath}
    where $\Delta \tau_n := \tau_{n+1} - \tau_n$ and  $n\in\N_0$.
\end{itemize}
Note that we have
\[X^z \stackrel{d}{=} X^0  + z, \quad z\in \R,\]
and therefore we can rewrite the recursion of $\hat{\delta}$ as
\[\hat \delta_{n+1} \stackrel{d}{=} - \frac{1}{\Delta \tau_n} \int_{I_n} X^0_t dt,\]
for $n\in \N$.  This will play a key part in the proof of Theorem \ref{regret}.

As for the regret of the explore-first algorithm, we would like to find a bound of the expected regret
\begin{displaymath}
R^{\text{APAC}}(T) = \E\left[\int_0^T \left(X_t^{\hat \delta}\right)^2 - \left(X_t^{\delta}\right)^2dt\right]
\end{displaymath}
 up to time $T$. As in Section \ref{Ch-ICP} the process $X_t^{\delta}$ denotes the optimally controlled process starting at time $0$ having the same initial value as $X_t^{\hat \delta}$ .  The following proof shows that the regret of the adaptive algorithm is indeed bounded and its order is lower than the previous algorithm.\\

\begin{proof}[Proof of Theorem \ref{regret}]
    First let us fix a time horizon $T>0$. Let  the sequence $\tau_{n}$ be defined as in \eqref{APAC-Def} for $n\in \{0,1,\ldots, N\}$ where $N := \ceil{\log(T+1)}$. Then
    \begin{displaymath} 
        \sum_{n = 0}^{N-1} \Delta \tau_n = \sum_{n = 0}^{N-1} 2^n  = 2^N -1 \geq T,
    \end{displaymath}
    hence $[0,T]\subset \bigcup_{k=0}^{N-1} I_n$ for the time-intervals $I_n = [\tau_n,\tau_{n+1})$ defined as in \eqref{APAC-Def}.  
    We then split the regret into three parts:  
    \begin{align}
        R^{\text{APAC}}(T)    &= \E\left[\int_0^T \left(X_t^{\hat \delta}\right)^2 - \left(X_t^{\delta, 0}\right)^2dt\right]\nonumber \\ 
                &\leq \E\left[ \sum_{k=0}^{N-1} \int_{\tau_k}^{\tau_{k+1}}\big(X_t^{\hat{\delta}_k}\big)^2 - \mu_{\hat{\delta}_k}^2 dt \right] \label{regret1}\\
                &\quad + \E\left[ \sum_{k=0}^{N-1} \int_{\tau_k}^{\tau_{k+1}} \mu_{\hat{\delta}_k}^2  - \mu_{\delta}^2dt \right] \label{regret2}\\
                &\quad + \E\left[ \int_{0}^{T} \mu_{\delta}^2 - \left(X_t^{\delta, 0}\right)^2 dt \right]. \label{regret3}
    \end{align}

    The three parts can be described intuitively as follows: 
\begin{itemize}
\item \textbf{Part (\ref{regret1})}:  The cost/regret of the process not being stationary when using the estimate $\hat{\delta}$ as the control. 
\item \textbf{Part (\ref{regret2})}: The cost/regret of using the estimate $\hat{\delta}_k$ instead of the optimal $\delta$, when started from the stationary distribution.
\item \textbf{Part (\ref{regret3})}: The cost/regret of the process not being stationary when using the optimal control $\delta$. 
\end{itemize}


    In the following let $C>0$ denote a generic constant that only depends on $\theta_0$, $\theta_1$ and $K$.
    We show that the term (\ref{regret2}) is bounded (up to a constant) by $\log(T)$. 
    First, let us introduce two Brownian motions, reflected on the bounds $K$ and $-K$ respectively, defined by the SDEs
    \begin{align*}
        Y_t^+ &= \theta_0 dt + dW_t + dL_t^{\theta_0}, \\
        Y_t^- &= \theta_1 dt + dW_t - dL_t^{\theta_1},
    \end{align*}
    with initial values $Y_0^+ = K$ and $Y_0^- = -K$ and local times $L_t^{\theta_i}$ fulfilling
    \begin{displaymath}
        \int_0^t 1_{\{Y_t^+ > K\}} dL_t^{\theta_0} = 0, \qquad \int_0^t 1_{\{Y_t^- < -K\}} dL_t^{\theta_1} = 0.
    \end{displaymath}
    Here the Wiener process $(W_t)$ is the same process as in the definition of $X^{\hat{\delta}}_t$.
    Note that reflected Brownian motions with drift towards the boundary have finite $p$-th moments, see \cite{borodin2002handbook} Appendix 16. 
    Notice that by construction it holds $\hat{\delta}_k \in [-K, K]$. Then,  by using a comparison argument, one can show that
     $Y_t^- \leq X^{\hat{\delta}_k}_t \leq Y_t^+$ for all $t\geq 0$ and hence
    \begin{align}\label{refl-upbound2}
        \E\left[|X^{\hat{\delta}_k}_t|^p\right] \leq \E\left[|Y_t^+|^p + |Y_t^-|^p\right] \leq C.
    \end{align}

    For the first part (\ref{regret1}) we will proceed as in the proof of Theorem \ref{thm 1} with shorter arguments.  We decompose 
    \begin{align*}
        J = \E\left[ \int_{\tau_k}^{\tau_{k+1}}\big(X_t^{\hat{\delta}_k}\big)^2 - \mu_{\hat{\delta}_k}^2 dt\right]
    \end{align*}
    into two parts 
    \begin{align*}
        J_1 = \E \left[\int_{\tau_k}^{\sigma_k}\big(X_t^{\hat{\delta}_k}\big)^2 dt\right] \quad \textrm{ and }\quad J_2 = \E \left[\int_{\sigma_k}^{\tau_{k+1}}\big(X_t^{\hat{\delta}_k}\big)^2 dt\right] - \E \left[\int_{\tau_k}^{\tau_{k+1}}\mu_{\hat{\delta}_k}^2 dt\right],
    \end{align*}
    where $\sigma_k := \inf\{t \geq \tau_k : X_t^{\hat{\delta}_k} = \hat{\delta}_k \} \wedge \tau_{k+1}$ is the first 
    return time before $\tau_{k+1}$ to the threshold $\hat{\delta}_k$ after updating the estimator to $\hat{\delta}_k$. W.l.o.g. suppose  $X_t^{\hat{\delta}_k} \geq \hat{\delta}_k$. Then on $[\tau_k, \sigma_k]$ the state process $X^{\hat{\delta}_k}$ is a Brownian motion with negative drift $\theta_0$. 
    
    By the Markov property we then have 
    \begin{align*}
        \E\left[\int_{\tau_k}^{\sigma_k} \big(X_t^{\hat{\delta}_k} \big)^2 dt\right] &= \E\left[ \E\left[\int_{\tau_k}^{\sigma_k} \big(X_t^{\hat{\delta}_k} \big)^2 dt \big| \mathcal{F}_{\tau_k}\right]\right]  \\ 
        &= \E\left[\E \int_0^{\tau_{\hat{\delta}_k}}\left(X_{\tau_k}^{\hat{\delta}_k} + W_t + \theta_0t\right)^2 dt\right] \\
        &\leq  C \E\left[\left(X_{\tau_k}^{\hat{\delta}_k}\right)^3 +(\hat{\delta}_k)^3 \right].
    \end{align*}
    
     Proposition \ref{LLNcont} implies $\E[(\hat{\delta}_k)^3] < C$ and hence together with \eqref{refl-upbound2} we have 
    \begin{align}
        J_1 \leq \E\left[\left(X_{\tau_k}^{\hat{\delta}_k}\right)^3 +\big(\hat{\delta}_k\big)^3 \right] \leq C. \label{APAC-reg-J1}
    \end{align}
    
    For the second part $J_2$ we have by the Markov property 
    \begin{align*}
        \E \left[\int_{\sigma_k}^{\tau_{k+1}}\big(X_t^{\hat{\delta}_k}\big)^2 dt\right] & \leq \E \left[1_{\{\sigma_k < \tau_{k+1}\}} \int_{\sigma_k}^{\tau_{k+1} + \sigma_k - \tau_k}\big(X_t^{\hat{\delta}_k}\big)^2 dt\right] \\
         & = \E \left[1_{\{\sigma_k < \tau_{k+1}\}} \E\left[\int_{\sigma_k}^{\tau_{k+1} + \sigma_k - \tau_k}\big(X_t^{\hat{\delta}_k}\big)^2 dt\big| \cF_{\sigma_k} \right]\right] \\
          & = \E \left[1_{\{\sigma_k < \tau_{k+1}\}} \left(\int_{0}^{\tau_{k+1}-\tau_k}\E\big(X_t^{z,0,z}\big)^2 dt\right)_{z=\hat{\delta}_k}\right] \\
        & \leq  \E \left[\left(\int_{0}^{\tau_{k+1}-\tau_k}\E\big(X_t^{z, 0, z}\big)^2 dt\right)_{z=\hat{\delta}_k}\right]
    \end{align*}
    and hence as in the proof of Theorem \ref{thm 1} 
    \begin{align}
        J_2 = \E \left[\int_{\sigma_k}^{\tau_{k+1}}\big(X_t^{\hat{\delta}_k}\big)^2 dt\right] - \E \left[\int_{\tau_k}^{\tau_{k+1}}\mu_{\hat{\delta}_k}^2 dt\right] \leq \E\left[\left (\int_0^{\tau_{k+1}- \tau_k}\E\big(X_t^{z, 0, z}\big)^2 - \mu_z^2 dt\right)_{z=\hat{\delta}_k}\right].\label{APACreg1}
    \end{align}
    Now by applying Lemma \ref{lemma aug 16} the right hand side of (\ref{APACreg1}) simplifies to 
    \begin{align}
        J_2 \leq \int_0^{\tau_{k+1}- \tau_k} \big( \E\left[(X_t^{0})^2\right] - \mu_0^2 \big)dt + 2\E[\hat{\delta}_k]\E\left[ \int_0^{\tau_{k+1}- \tau_k} \big(X_t^0 + \delta\big) dt\right].\label{APACreg2}
    \end{align}
    For the first term of the right hand side of (\ref{APACreg2}) we can now apply Proposition \ref{stationary} with $\alpha > 2$ 
    \begin{align}
        \int_0^{\tau_{k+1}- \tau_k} \big( \E\left[(X_t^{0})^2\right] - \mu_0^2 \big)dt &\leq C + \int_1^{\tau_{k+1}- \tau_k}\E\left[(X_t^{0})^2 - \mu_0^2\right]dt \nonumber \\
        &\leq C + \int_1^{\tau_{k+1}- \tau_k} t^{-\frac{\alpha}{2}}dt\nonumber \\
	&= C + \frac{1}{1-\frac{\alpha}{2}} \left(\big(\Delta\tau_k\big)^{1-\frac{\alpha}{2}} - 1 \right).\label{APACreg3}
    \end{align}
    Now by taking $\alpha = 3$ (or larger $\alpha$) the term (\ref{APACreg3}) can be bounded by some constant for large $k$. We now turn to the second summand of the right hand side of (\ref{APACreg2}).
    By analogous steps as in (\ref{APACreg3}) together with Remark \ref{stationary-r} 
    we have
    \begin{align*}
    	\E\left[\int_0^{\tau_{k+1}- \tau_k} \big(X_t^0 + \delta \big) dt\right] \leq C. 
    \end{align*}
    By Proposition \ref{LLNcont} we also have $\E[\hat{\delta}_k] < C$. Hence we have $J_2 \leq C$.
    Combining this with \eqref{APAC-reg-J1}, we obtain that 
    part (\ref{regret1}) is bounded by
    \begin{align}
        \E\left[ \sum_{k=0}^{N-1} \int_{\tau_k}^{\tau_{k+1}}\big(X_t^{\hat{\delta}_k}\big)^2 - \mu_{\hat{\delta}_k}^2 dt \right]  \leq \sum_{k=0}^{N-1} \big(J_1 + J_2\big) \leq \sum_{k=0}^{N-1} C \leq C\big(\log(T) + 1\big). \label{APACreg5}
    \end{align}

    We show that the second part \eqref{regret2} is also bounded by $C\cdot\log(T)$.
    Note that by Proposition \ref{LLNcont}, part 1., it holds  $\delta = -\mu_0^1$.
    Using this together with Lemma \ref{property_stationary_distr} and Proposition \ref{LLNcont}, part 4., we have
    \begin{align*}
      \Delta \tau_n \E\left[\abs{ \frac{1}{\Delta \tau_{n-1}} \int_{\tau_{n-1}}^{\tau_{n}} X_t^0 dt - \mu_0^1}^2\right] 
     &= \Delta \tau_n  \E\left[\E\left[\left.\abs{ \frac{1}{\Delta \tau_{n-1}} \int_{\tau_{n-1}}^{\tau_{n}} X_t^0 - \mu_0^1 dt }^2\right| X^{0,\tau_{n-1}, x_{n-1}}_{\tau_{n-1}} \right] \right] \\
     &\leq C
     \frac{\Delta \tau_n}{\Delta \tau_{n-1}} \E\left[1
      + \big(X^{0,\tau_{n-1}, x_{n-1}}_{\tau_{n-1}}\big)^{4} \right] \\
      &\leq C \frac{\Delta \tau_n}{\Delta \tau_{n-1}} \E\left[1 +( Y_t^+)^4 + (Y_t^-)^4\right]\\
     &\leq  \frac{\Delta \tau_n}{\Delta \tau_{n-1}}C.
\end{align*}

Since there are $\ceil{ \log_2(T)}$ summands in (\ref{regret2}) we have 
\begin{align}
    \E\left[ \sum_{k=0}^{N-1} \int_{\tau_k}^{\tau_{k+1}} \mu_{\hat{\delta}_k}^2  - \mu_{\delta}^2dt \right] \leq \sum_{k=0}^{N-1} 2\cdot C \leq 2  C(\log_2(T)+1). \label{rpart2}
\end{align}

  Finally, we turn to the estimation of \eqref{regret3}. Note that
    Proposition \ref{stationary} with $\alpha = 2$ yields 
    \begin{align}
        \E\left[ \int_{0}^{T} \left(\mu_{\delta}^2 - X_t^{\delta, 0}\right)^2 dt \right] &= \E \left[\int_0^1 \mu_{\delta}^2 - \left(X_t^{\delta, 0}\right)^2 dt\right] +  \int_{1}^{T} \E\left[ \left(X_t^{\delta, 0}\right)^2  - \mu_{\delta}^2 \right] dt \nonumber\\
        &\leq C \left(1+  \int_1^T \frac{1}{t}dt \right) \nonumber\\
        &= C\left( 1 + \log_2(T)\right).\label{rpart3}
    \end{align} 
    Combining parts (\ref{APACreg5}), (\ref{rpart2}) and (\ref{rpart3}) we conclude \eqref{regret-eq}.

\end{proof}


\printbibliography

\end{document}